\pdfoutput=1  %% arXiv: build with pdflatex
\documentclass[11pt]{article}

\usepackage{preprint}                 % local style: layout, title block, header
\usepackage[T1]{fontenc}
\usepackage[utf8]{inputenc}
\usepackage{amsmath,amssymb,amsthm}
\usepackage{amsfonts,dsfont,mathrsfs}
\usepackage{lmodern}                  % Computer Modern (LaTeX default), vector T1
\usepackage{graphicx}
\usepackage{subcaption}
\usepackage{booktabs}
\usepackage{array}
\usepackage{microtype}
\usepackage{caption}

\usepackage{algorithm}
\usepackage{algorithmic}

\usepackage[round]{natbib}            % \citep / \citet, author-year

\usepackage[hyphens]{url}
\usepackage[colorlinks=true,linkcolor=linkblue,citecolor=linkblue,
            urlcolor=linkblue]{hyperref}
\definecolor{linkblue}{RGB}{0,60,140}
\definecolor{thblue}{RGB}{65,102,245}

\usepackage{pgfplots}
\pgfplotsset{compat=newest}
\definecolor{grsone}{RGB}{31,119,180}
\definecolor{grstwo}{RGB}{255,127,14}
\definecolor{grsfour}{RGB}{44,160,44}
\definecolor{grseight}{RGB}{214,39,40}

\newtheoremstyle{preprintthm}%
  {\topsep}{\topsep}%
  {\itshape}{0pt}%
  {\normalsize\headingfont}{.}%
  {0.5em}%
  {\thmname{#1}\thmnumber{ #2}\thmnote{ {\normalfont(#3)}}}

\theoremstyle{preprintthm}
\newtheorem{theorem}{Theorem}
\newtheorem{remark}{Remark}

\theoremstyle{plain}          % everything below keeps the old look

\colorlet{thmrule}{thblue}
\colorlet{thmbg}{thblue!5!white}

\tcbset{thmbox/.style={
    enhanced, breakable,
    colback=thmbg,
    frame hidden, boxrule=0pt,
    borderline west={2pt}{0pt}{thmrule},
    arc=2pt,
    left=1em, right=1em, top=0.6em, bottom=0.6em, boxsep=0pt,
    before skip=\topsep, after skip=\topsep}}

\tcbset{rulebox/.style={thmbox,
    opacityback=0,
    left=0.9em, right=0pt,
    top=0.35em, bottom=0.35em}}

\tcolorboxenvironment{theorem}{thmbox}
\newcommand*\diff{\mathop{}\!\mathrm{d}}

\newcommand{\gmdim}{500}
\newcommand{\gmchurn}{0.1}

\newcommand{\rt}{\mathrm{r}}
\newcommand{\pa}{\mathrm{pa}}
\newcommand{\Verts}{\mathcal{V}}
\newcommand{\ch}{\mathcal{C}}
\newcommand{\Tree}{\mathcal{T}}
\newcommand{\Nacc}{N_\alpha}
\newcommand{\Verify}{\textsc{Verify}}

\usetikzlibrary{calc}
\definecolor{phaseonebase}{RGB}{222,161,147}    % blue
\definecolor{phasetwobase}{RGB}{47, 222, 161}    % purple
\definecolor{phasethreebase}{RGB}{108,59,170}  % lime
\colorlet{phasedraft}{phaseonebase!10!white}
\colorlet{phaseverify}{phasetwobase!10!white}
\colorlet{phaseaccept}{phasethreebase!12!white}
\colorlet{phasedraftink}{phaseonebase!90!black}
\colorlet{phaseverifyink}{phasetwobase!85!black}
\colorlet{phaseacceptink}{phasethreebase!85!black}
\newcommand{\algphase}[2]{\textcolor{#1}{\textit{// #2}}}
\newcommand{\phasename}[2]{\textcolor{#1}{\textbf{#2}}}
\newcommand{\phasemark}[1]{\tikz[remember picture,overlay,baseline]{\coordinate (#1);}}
\newcommand{\phaseband}[3]{%  #1 = colour, #2 = first mark, #3 = mark on the line after
  \begin{tikzpicture}[remember picture,overlay]
    \fill[#1,rounded corners=2pt,blend mode=multiply]
      ([shift={(-2.3em,1.6ex)}]#2)
      rectangle
      ([shift={(\dimexpr\linewidth-2.3em\relax,2.0ex)}]#3);
  \end{tikzpicture}}

\title{Accelerating Diffusion Sampling via\\ Speculative Draft Trees}
\runningtitle{Accelerating Diffusion Sampling via Speculative Draft Trees}
\newcommand{\ICLline}{Imperial College London}

\newcommand{\ICLfull}[1]{\scriptsize\ICLline\\ \scriptsize\texttt{#1@imperial.ac.uk}}

\newcommand{\authorsGrid}{%
  \small
  \renewcommand{\authorcolsep}{0.05em}%
  \renewcommand{\arraystretch}{0.82}%
  \begin{tabular}{c@{\hspace{2em}}c@{\hspace{2em}}c}
    \AuthorCol{Marcello Bullo}{\ICLfull{m.bullo21}} &
    \AuthorCol{Yanxiao Liu}{\ICLfull{y.liu2}} &
    \AuthorCol{\"Oyk\"u S\i la G\"uner}{\ICLfull{o.guner25}}
  \end{tabular}\\[0.9em]
  \begin{tabular}{c@{\hspace{2em}}c}
    \AuthorCol{Arpan Mukherjee}{\ICLfull{a.mukherjee}} &
    \AuthorCol{Deniz G\"und\"uz}{\ICLfull{d.gunduz}}
  \end{tabular}}

\author{\authorsGrid}

\begin{document}
\maketitle
\allowdisplaybreaks

\begin{abstract}
    Speculative sampling accelerates diffusion model generation by drafting inexpensive candidate states and correcting them under a coupling that preserves the target distribution {\em exactly}, reducing the number of expensive target evaluations. Existing diffusion samplers, notably those based on reflection maximal coupling, are topologically constrained: their lookahead drafts form a chain graph, a single linear sequence, which inherently limits the acceptance rate per target evaluation. We connect speculative sampling in diffusion models to relative entropy coding (REC). This perspective shows the lookahead need not be linear and motivates our central contribution, {\em draft trees}, which enrich the candidates considered per round and lower the target function evaluations. We further adopt greedy rejection sampling, an REC algorithm, as the draft–target coupling, improving acceptance while guaranteeing exact target samples. Experiments across diverse target and draft models demonstrate up to 8.3\% acceleration over the reflection coupling baseline in practical settings.
\end{abstract}

%% TODO: replace with the real repository URL before posting.
\codelink{https://github.com/marcellobullo/tree-specdiff}

% Uncomment the following to link to your code, datasets, an extended version or similar.
% You must keep this block between (not within) the abstract and the main body of the paper.
% Make sure that you do not de-anonymize yourself with these links.
% \begin{links}
%     \link{Code}{https://aaai.org/example/code}
%     \link{Datasets}{https://aaai.org/example/datasets}
%     \link{Extended version}{https://aaai.org/example/extended-version}
% \end{links}

\section{Introduction}
\label{sec::intro}
Denoising diffusion probabilistic models (DDPMs)~\citep{sohl2015deep,ho2020denoising,song2021scorebased} have become the dominant paradigm for high-fidelity image generation. By learning to reverse a gradual noising process, diffusion models synthesize images through a sequence of denoising steps that progressively transform Gaussian noise into a sample from the target data distribution. Their remarkable generation quality has established diffusion models as the state of the art across a wide range of generative tasks \citep{yang2023diffusion}. 

\begin{figure}[t]
    \centering
    \includegraphics[width=.7\linewidth]{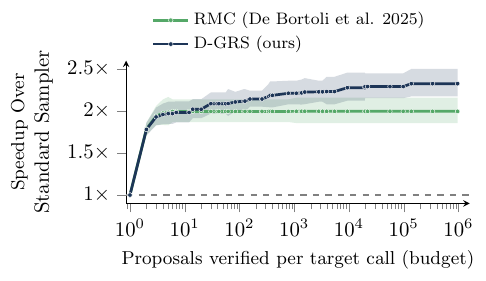}
    \caption{\textbf{Efficiency frontier of speculative sampling.} The plot compares the maximal speedup attainable over a standard sampler at a given proposal budget $B$ for the setting described in Section \ref{sec:exp-gm}. By allocating budget strictly to sequential depth, RMC rapidly saturates near a $2.00\times$ speedup. In contrast, D-GRS allocates budget to multiple proposals per node, effectively raising the per-step acceptance probability. This branching strategy allows D-GRS to monotonically convert larger budgets into measurable acceleration, up to a $14\%$ reduction in target model evaluations relative to RMC.}
    \label{fig:intro-speedup}
\end{figure}
% Figure~\ref{fig:intro-speedup} shows the resulting efficiency frontier, i.e.\ the best speedup attainable at a proposal budget of at most $B$. RMC improves quickly up to $B\approx10$ and then \emph{saturates}: a longer draft chain does not help, because once a proposal is rejected the remainder of the chain is discarded, and the probability that a chain survives $\ell$ steps decays geometrically[CITE]. 

% Its best cell over the entire grid is $15.53\pm1.17$ calls ($2.00\times$ speedup, at $B=1092$), statistically indistinguishable from what it already achieves at $B=14$ ($15.73$ calls). 

% D-GRS converts the same budget into width rather than depth and keeps improving monotonically with $B$: $15.34$ calls at $B=14$, $14.25$ at $B=340$, $13.91$ at $B=2800$ and $13.33\pm0.93$ at its best cell
% ($K=7$, $L=6$), 

% a $2.33\times$ speedup over the standard sampler and a
% $14\%$ reduction in NFE relative to the best RMC configuration. 

% The gap widens with the budget, which is the behaviour the tree construction is designed to produce: additional children raise the per-step acceptance probability, whereas additional chain depth only extends a walk that has already committed.
A major drawback of diffusion models, however, is their high inference cost. Image generation requires repeatedly evaluating a large neural network over tens to hundreds of denoising iterations, making sampling extremely slow. This computational bottleneck has motivated a broad range of acceleration techniques. Training-free approaches include improved solvers to reduce the number of sampling steps~\citep{zhang2022fast,lu2025dpm}, while training-based methods learn few-step generators, either by preserving the sampling trajectory, as in consistency models~\citep{song2023consistency} and their two-time flow-map
generalizations~\citep{geng2026improved,zhou2026terminal,sabour2026align}, recently extended to learn the solution map of the underlying SDE~\citep{mccallum2026strong}, or by directly matching the output distribution~\citep{yin2024improved,sauer2024adversarial}. While effective in reducing latency, these approaches typically incur either additional training cost or degradation in sample quality~\citep{dieleman2026flowmaps,lai2025principles}. 
Parallel simulation provides an orthogonal acceleration strategy, trading additional computation and memory for reduced latency~\citep{shih2023parallel,tang2024accelerating,chen2024accelerating,zhou2025parallel}. Despite reducing the sequential depth of diffusion sampling, it typically requires multiple iterative refinement rounds and substantial parallel resources.
% Parallel simulation methods~\citep{shih2023parallel,chen2024accelerating} provide an orthogonal direction by exploiting parallel computation, yet remain fundamentally iterative, and therefore offer limited acceleration.

Recently, \emph{speculative diffusion sampling}~\citep{de2025accelerated} has emerged as a complementary paradigm for accelerating diffusion inference. Inspired by speculative decoding for large language models~\citep{leviathan2023fast,chen2023accelerating}, these methods first generate candidate denoising states using an inexpensive source (referred to as a {\em proposal}) before verifying them using the target diffusion model. Whenever multiple draft states are accepted, several denoising iterations are effectively skipped, leading directly to lower inference latency. Crucially, unlike approximate acceleration techniques, speculative diffusion sampling is \emph{exact}: it preserves the sampling distribution of the target model, and therefore incurs no loss in image quality.

Despite these advantages, existing speculative diffusion samplers are fundamentally \emph{topology constrained}. They generate drafts sequentially along the diffusion trajectory, resulting in a linear chain of proposal states. From a broader perspective, speculative sampling is a \emph{budget allocation} problem: given a fixed amount of parallel computation, how should proposal evaluations be allocated to maximize the probability of accepting draft states? Restricting proposals to a chain represents only one possible allocation strategy and can leave substantial parallelism underutilized, particularly in the low-acceptance regime that commonly arises in practice.

In this paper, we take a first step toward lifting this restriction. Our contributions are two-fold. First, we generalize speculative diffusion sampling from linear draft chains to \emph{draft trees}, enabling substantially richer allocations of parallel computation across candidate denoising trajectories. Second, we establish a novel connection between speculative sampling and \emph{relative entropy coding} (REC)~\citep{li2024channel,li2018strong,flamich2023adaptive,flamich2024greedy}. This connection leads naturally to a greedy rejection sampling (GRS)~\citep{harsha2010communication} strategy for budget allocation, providing an efficient and principled mechanism for constructing draft trees. Extensive experiments demonstrate that our approach consistently improves speculative diffusion sampling, achieving speedups of up to 7.4\% over existing methods for practical batch sizes $\leq 256$, while preserving exact sampling (see Figure~\ref{fig:intro-speedup}).

\section{Problem Setting}
\paragraph{DDPM.}
DDPMs define a fixed forward diffusion process that gradually perturbs data samples with Gaussian noise. Let $X_0\sim p_{\rm data}$ denote a data sample. The forward process $(X_t)_{t\in[0,1]}$ evolves according to the stochastic differential equation (SDE)
\begin{align}
\diff X_t
=
f_t(X_t)\,\diff t
+
g_t\,\diff B_t\ ,
\end{align}
where $(B_t)_{t\in[0,1]}$ is a $d$-dimensional Brownian motion. Image generation is performed by simulating the associated reverse-time process $(Y_t)_{t\in[0,1]}$, which satisfies
\begin{align}
%\begin{aligned}
\label{eq:sde}
\diff Y_t
&=
b_t^{q}(Y_t)\,\diff t
+
\varepsilon g_{1-t}\,\diff W_t\,
\end{align}
\begin{align}\label{eq:reverse_drift}
b_t^{q}(x)
&=
-f_{1-t}(x)
+
\frac{1+\varepsilon^2}{2}
g_{1-t}^2
s_{1-t}(x)\ ,
%\end{aligned}
\end{align}
where $(W_t)_{t\in[0,1]}$ is a $d$-dimensional Brownian motion,
$s_t(x)$ is the Stein score function, and
$\varepsilon\in[0,1]$ is the \emph{churn parameter} controlling the
stochasticity of the reverse process. In particular, $\varepsilon=1$
recovers the standard reverse-time SDE, whereas $\varepsilon=0$ yields the
deterministic probability-flow ordinary differential equation (ODE). To simulate this reverse process, we introduce a uniform time grid
\[
t_n=n\gamma\ ,
\qquad
n=0,\ldots,N\ ,
\qquad
\gamma=\frac{1}{N}\ ,
\]
and write $Y_n:=Y_{t_n}$. Applying the Euler--Maruyama discretization with $\xi_n\sim\mathcal N(0,\mathbb I_d)$ gives
\begin{align}\label{eq:em_discretisation}
Y_{n+1}
=
Y_n
+
\gamma b_{t_n}^{q}(Y_n)
+
\sqrt{\gamma}\,\varepsilon g_{1-t_n}\,\xi_n\ .
\end{align}
Hence, conditional on $Y_n=y_n$, the next state follows $Y_{n+1}\,\big|\, Y_n=y_n \sim \mathcal N\!\left(m_n^{q}(y_n), \sigma_n^2\mathbb I_d\right)$, where
\begin{align}\label{eq::target_mean}
m_n^{q}(y)
:=
y+\gamma b_{t_n}^{q}(y) \quad\text{and}\quad
\sigma_n
:=
\sqrt{\gamma}\,\varepsilon g_{1-t_n}\ .
\end{align}
Equivalently, denoting the target transition kernel by
$\mathbb Q_n(\cdot\mid y_n)$, we have $\mathbb Q_n(\cdot\mid y_n) = \mathcal N\!\left(m_n^{q}(y_n), \sigma_n^2\mathbb I_d \right),$ and the discretized reverse process forms a Markov chain with joint density
\begin{align}
q(y_0,\ldots,y_N)
=
q_0(y_0)
\displaystyle\prod_{n=0}^{N-1}
q_n(y_{n+1}\mid y_n)\ ,
\end{align}
where $q_0$ denotes the initial Gaussian density of the reverse process.

\paragraph{Speculative diffusion sampling.}
The principal computational bottleneck in diffusion inference is the repeated evaluation of the reverse drift $b_{t_n}^{q}$, which is parameterized by a large neural network. Since Euler--Maruyama simulation proceeds sequentially, each state $Y_{n+1}$ depends on the reverse drift evaluated at the preceding state $Y_n$, limiting the utilization of modern parallel hardware. Speculative diffusion sampling addresses this bottleneck by exploiting parallel computation. Rather than simulating one denoising step at a time, an inexpensive \emph{proposal} model first generates multiple draft states. The key assumption underlying speculative sampling is that \emph{drafting is significantly cheaper than verification}. The drafted states are subsequently verified using the target reverse drift in parallel. Consequently, given a parallel compute budget of $B$, one batched target-model evaluation, corresponding to one neural function evaluation (NFE), can verify all proposed drafts.

The effectiveness of speculative sampling depends critically on the quality of the proposal model. Better proposals lead to higher acceptance rates, allowing more denoising steps to be advanced per NFE and thereby increasing the overall speedup while preserving exact sampling. Existing speculative diffusion methods~\citep{de2025accelerated} consider two proposal mechanisms. The first trains a lightweight diffusion model to approximate the target reverse drift. The second, referred to as the \emph{delayed reverse drift}, avoids additional training by replacing the current reverse drift with one evaluated at an earlier state. Specifically, the target transition mean $m_n^{q}(y_n)=y_n+\gamma \textcolor{red}{b_{t_n}^{q}(y_n)}$ is approximated by the proposal mean
\begin{align}\label{eq:delayed_drift}
m_n^{p}(y_n)
=
y_n+\gamma \textcolor{blue}{b_{t_k}^{q}(y_k)},
\qquad
k<n\ ,
\end{align}
while retaining the same transition variance $\sigma_n^2=\gamma\varepsilon^2g_{1-t_n}^2$. Empirically, delayed reverse drift has been shown to produce higher-quality proposals than lightweight draft models~\citep{de2025accelerated}. Accordingly, we adopt it as the proposal mechanism throughout this paper.

    \section{Speculative Algorithms}
\label{sec:speculative_algorithms}
In this section, we first state the general template that both algorithms instantiate, then review the reflection maximal coupling (RMC) algorithm of~\citep{de2025accelerated}, and finally introduce our diffusion-GRS (D-GRS) algorithm. At a high level, every speculative sampling algorithm consists of three phases: a \emph{drafting} phase, a \emph{verification} phase, and an \emph{acceptance} phase. Given a parallel compute budget, the drafting phase sequentially generates candidate states according to a prescribed draft topology, the verification phase evaluates all drafted states in parallel using the target model, and the acceptance phase sequentially examines the verified states to determine whether a proposal can be accepted as the next target state. Consequently, speculative sampling algorithms differ primarily in the choice of \textbf{draft topology} and the \textbf{acceptance criterion} used after verification. Both RMC and D-GRS rely on a common one-dimensional reduction that exploits the translation and rotational invariance of isotropic Gaussian distributions, described next. Throughout, we use $\widehat Y$ (or $\widehat S$) to denote a proposed state and $Y$ (or $S$) to denote the accepted target state.

\subsection{A General Template For Speculative Diffusion}
We begin from the general procedure, stated so that the two algorithms of this section differ only in the two components the paper varies: the \emph{draft
topology} and the \emph{verification rule}. The template is agnostic to how proposals are produced and it is shown in Algorithm~\ref{alg:template}.

\begin{algorithm}[!t]
\caption{Speculative diffusion sampling for an arbitrary draft tree $\Tree$. 
%In Phase~1 the inner loop ranges over $\Verts_{\ell-1}$, the parents that spawn layer $\ell$; for $\ell=1$ this is the root alone.
}
\label{alg:template}
\begin{algorithmic}[1]
\REQUIRE draft and target mean functions $m^p$, $m^q$, scales $\{\sigma_n\}$, horizon
$N$, draft tree $\Tree=(\Verts,\pa)$ of depth $L$, rule $\Verify$
\ENSURE trajectory $Y_{1:N}$ with the law~(6) of the target chain
\STATE $n \leftarrow 0$; \ $Y_0 \sim q_0$
\WHILE{$n < N$}
  \STATE $L_n \leftarrow \min\{L, N-n\}$, $\widehat Y_\rt \leftarrow Y_n$, $\Tree_n \leftarrow \Tree|_{L_n}$, $\mathcal{I}_n \leftarrow \mathcal{I}(\Tree_n)$
  \STATE \phasemark{phdraft}\algphase{phasedraftink}{Phase 1 -- Drafting}
  \FOR{$\ell = 1$ to $L_n$}
    \FORALL{$u \in \Verts_{\ell-1}$ \textbf{in parallel}}
      % \STATE draw $\widehat Y_{v} \sim \mathcal N\big(m^p_{n+\ell}(\widehat Y_u),\,
      %        \sigma^2_{n+\ell} I_d\big)$ independently for each $v \in \ch(u)$
       \STATE $\big(\widehat Y_{v}\big)_{v \in \ch(u)} \overset{\text{i.i.d.}}{\sim} \mathcal N\big(m^p_{n+\ell-1}(\widehat Y_u),\,
             \sigma^2_{n+\ell-1} I_d\big)$
    \ENDFOR
  \ENDFOR
  \STATE \phasemark{phverify}\algphase{phaseverifyink}{Phase 2 -- Verification}
  \FORALL{$u \in \mathcal I_n$ \textbf{in parallel}}
    \STATE evaluate $m^q_{n+|u|}\big(\widehat Y_u\big)$
  \ENDFOR
  \STATE \phasemark{phaccept}\algphase{phaseacceptink}{Phase 3 -- Acceptance}
  \STATE $u \leftarrow \rt$; \ $\Nacc \leftarrow 0$
  \FOR{$\ell = 1$ to $L_n$}
    \STATE $\big(Y_{n+\ell},\,\texttt{accepted},\,v^\star\big) \leftarrow
           \Verify\big(m^p_{n+\ell-1}(\widehat Y_u),\, m^q_{n+\ell-1}(\widehat Y_u),\,
           \sigma_{n+\ell-1};\, \{\widehat Y_{v}\}_{v \in \ch(u)}\big)$
    \IF{\NOT \texttt{accepted}}
     \STATE $\Nacc \leftarrow \Nacc + 1$
      \STATE \textbf{break} %   \COMMENT{residual draw: no children}
    \ENDIF
    \STATE $\Nacc \leftarrow \Nacc + 1$; \ $u \leftarrow v^\star$
  \ENDFOR
  % \STATE $n \leftarrow n + \min\{\Nacc + 1,\, L_n\}$
  \STATE \phasemark{phend}$n \leftarrow n + \min\{\Nacc,\, L_n\}$
\ENDWHILE
\RETURN $Y_{1:N}$
\end{algorithmic}
\phaseband{phasedraft}{phdraft}{phverify}%
\phaseband{phaseverify}{phverify}{phaccept}%
\phaseband{phaseaccept}{phaccept}{phend}%
\end{algorithm}

\subsubsection{Setting}
Fix a round beginning at step $n$. The template requires only that the draft and target transitions be isotropic Gaussians sharing the variance schedule and
differing in their means,
\begin{equation}
  \begin{aligned}
    \mathbb P_n(\cdot\mid y) &= \mathcal N\big(m^p_n(y),\sigma_n^2 I_d\big), \\
    \mathbb Q_n(\cdot\mid y) &= \mathcal N\big(m^q_n(y),\sigma_n^2 I_d\big),
  \end{aligned}
  \label{eq::kernels}
\end{equation}
with $m^q_n$ as in~\eqref{eq::target_mean} and $m^p_n$ arbitrary. This is exactly the structure the
rank-$1$ reduction in \ref{par:rank1} exploits, so both Algorithm~\ref{alg:rmc} and Algorithm~\ref{alg:grs} below apply
unchanged at every node.

\subsubsection{Draft Topology}
A \emph{node} is one candidate realisation of the process state at one step. Its depth is
the step it realises and its parent is the state it was conditioned on. Because the reverse chain is Markov, each node is drawn conditionally on exactly one parent, so a draft set always carries the structure of a rooted tree, and we take
such a tree as the general description of a draft topology.

A \emph{draft tree} is a finite rooted tree $\Tree=(\Verts,\pa)$ with vertex set
$\Verts$, root $\rt$, and parent map $\pa:\Verts\setminus\{\rt\}\to\Verts$. We
write $\ch(u):=\pa^{-1}(u)$ for the children of $u$, $|u|$ for its depth, and
\begin{equation}\label{eq::nodes}
    \begin{aligned}
        &\Verts_0=\{\rt\},\\
        &\Verts_\ell:=\{u\in\Verts:\ |u|=\ell\},\\
        &\mathcal I(\Tree):=\{u\in\Verts:\ \ch(u)\neq\emptyset\},
    \end{aligned}
\end{equation}
for the layers and the internal nodes. The root holds the last accepted state
$\widehat Y_\rt=Y_n$ and node $u\neq\rt$ holds the drafted state $\widehat Y_u$, a candidate realisation of step $n+|u|$. The number of drafted states is $|\Verts|-1$, and the target model can be evaluated at all the tree nodes $u\in\Verts$, or only at the internal ones $u\in\mathcal I(\Tree)$. 
%The latter is sufficient for any acceptance criterion to take an informed decision, although the former 
%, while the target model is evaluated only at the internal nodes $\mathcal I$, since a leaf is never a parent. 
% We therefore take the \emph{budget} to be that batch, $B:=|\mathcal I|$.

The children of a node are i.i.d. candidates for the \emph{same} step, of which
verification retains at most one. Because the accepted candidate node is determined exclusively during the verification phase, the proposal mechanism must exhaustively expand all sibling branches. The uniform $(K,L)$ family used in the paper is the case
$|\ch(u)|=K$ for every $u$ of depth at most $L-1$, and $\Verts_\ell=\emptyset$ for $\ell>L$. Therefore, $|\Verts_\ell|=K^{\ell}$ and
\begin{equation}
  |\mathcal I(\Tree)|=\sum_{\ell=0}^{L-1}K^{\ell},
  \qquad
  |\Verts|=\sum_{\ell=0}^{L}|\Verts_\ell|=\sum_{\ell=0}^{L}K^{\ell}.
  \label{eq::uniform}
\end{equation}
During the verification phase, when the target model is evaluated at $u\in\Verts$, a round costs $B=|\Verts|$ target evaluations, while it costs $B=|\mathcal I(\Tree)|$ when the target model is evaluated at $u\in\mathcal I(\Tree)$.
The chain of RMC corresponds to the special case where $K=1$, while a single-node proposal list to $L=1$. Depth-dependent widths $K_\ell$, or irregular trees (e.g., pruned), are covered by $\Tree$ directly and require no change to what follows.

Let $\Verts_{\le m}:=\bigcup_{\ell=0}^{m}\Verts_\ell$
denote the set of nodes of depth at most $m$, and let
\begin{equation}
  \Tree|_{m}
  :=
  \Tree\big[\Verts_{\le m}\big]
  =
  \big(\Verts_{\le m},\ \pa|_{\Verts_{\le m}\setminus\{\rt\}}\big)
  \label{eq::truncation}
\end{equation}
denote the subtree of $\Tree$ induced by $\Verts_{\le m}$. 

A round starting at step $n$ therefore operates on $\Tree|_{L_n}$ with $L_n:=\min\{L,N-n\}$, which equals $\Tree$ in every round but the last few.

\subsubsection{Round Structure}
Algorithm~\ref{alg:template} describes the speculative diffusion sampling procedure for an arbitrary draft tree $\Tree$. Each iteration of the algorithm involves three distinct phases.
\begin{enumerate}
    \item The \phasename{phasedraftink}{drafting phase} executes a layer-wise expansion of the tree: this expansion is strictly sequential with respect to depth, as child nodes cannot be sampled prior to their respective parents. However, all nodes residing at the same depth level are generated concurrently.
    \item The \phasename{phaseverifyink}{verification phase} computes the target means for all internal nodes of the current tree via a unified batched operation, representing the sole evaluation of the target model within a given iteration. 
    \item Finally, the \phasename{phaseacceptink}{acceptance phase} traverses downward from the root node, iteratively applying the verification criterion at each depth level and terminating immediately upon encountering the first rejection.
\end{enumerate}
The verification rule, denoted as $\Verify$, serves as a modular component within this framework and is governed by the following formal contract. Given a parent node $u$, its associated proposal and target means, the corresponding step scale, and its set of drafted children $\{\widehat Y_v\}_{v\in\ch(u)}$, the rule outputs a tuple $(Y, \texttt{accepted}, v^\star)$. The primary component, $Y$, constitutes an exact sample drawn from the target transition distribution, $Y \sim \mathbb{Q}\big(\cdot \mid \widehat{Y}_u\big)$. This sample is obtained through one of two mutually exclusive mechanisms: either a valid child node is identified, wherein the boolean flag $\texttt{accepted}$ evaluates to true and $Y = \widehat{Y}_{v^\star}$ for a specific $v^\star \in \ch(u)$; alternatively, if no child satisfies the criterion, $\texttt{accepted}$ evaluates to false, and $Y$ is instead sampled from the normalized residual distribution formulated in~\eqref{eq:GRS_residual} below. Algorithm~\ref{alg:grs} fulfills this contract for any arbitrary branching factor $K \ge 1$, whereas Algorithm~\ref{alg:rmc} is specialized for $K=1$. 
%Both algorithms strictly necessitate the supplementary return of the accepted child, a value inherently accessible at their point of termination.

The template leaves exactly two components free: the draft topology $\Tree$ and the verification rule $\Verify$. The remainder of this section instantiates them. RMC takes $\Tree$ to be a chain ($K=1$) and $\Verify$ to be the reflection maximal coupling by \citep{de2025accelerated}; D-GRS takes $\Tree$ to be a $K$-ary tree and $\Verify$ to be our proposed D-GRS algorithm. Both rely on a common one dimensional reduction that exploits the translation and rotational invariance of isotropic Gaussian distributions, described next.

\subsubsection{Rank-1 Projection}\label{par:rank1}
Since the proposal $\mathbb{P}:=\mathcal{N}(\mu_p,\sigma^2\mathbb{I}_d)$ and target $\mathbb{Q}:=\mathcal{N}(\mu_q,\sigma^2\mathbb{I}_d)$ are isotropic Gaussians differing only in their means, we can reduce the $d$-dimensional coupling to a single dimension. Let
\begin{align}\label{eq:r1proj-variables}
    \Delta:=(\mu_q-\mu_p)/\sigma, \quad \delta:=\|\Delta\|, \quad e:=\Delta/\delta ,
\end{align}
be the normalized mean displacement, its Euclidean norm, and the corresponding unit vector, respectively.
Standardizing a state $Y$ as $Z:=(Y-\mu_p)/\sigma$ yields $Z\sim\mathcal{N}(0,\mathbb{I}_d)$ under $\mathbb{P}$ and $Z\sim\mathcal{N}(\delta e,\mathbb{I}_d)$ under $\mathbb{Q}$. Note that any realization $z\in\mathbb{R}^d$ admits the orthogonal decomposition
\begin{equation}
    z = e\underbrace{e^\top z}_{s} + \underbrace{(\mathbb{I}_d-ee^\top)z}_{z_\perp}\ .
\end{equation}
Under either distribution, the random variables $S:=e^\top Z$ and $Z_\perp$ are independent, and the orthogonal residual is $Z_\perp\sim\mathcal{N}(0,\mathbb{I}_d-ee^\top)$. {\em Only the scalar projection differs}: $S\sim\mathcal{N}(0,1)$ under $\mathbb{P}$, and $S\sim\mathcal{N}(\delta,1)$ under $\mathbb{Q}$. Consequently, given a proposal sample $\widehat Y\sim\mathbb{P}$, exact sampling from $\mathbb{Q}$ requires transforming only the scalar projection $\widehat{S}$ from $\mathcal{N}(0,1)$ to $\mathcal{N}(\delta,1)$, while retaining the proposal's orthogonal component $Z_\perp$ unchanged:
\begin{equation}\label{eq:rank1-projector}
    \widehat{S} := e^\top(\widehat{Y}-\mu_p)/\sigma, \quad Z_\perp = (\mathbb{I}_d-ee^\top)(\widehat{Y}-\mu_p)/\sigma\ .
\end{equation}
Once a valid target scalar $S\sim\mathcal{N}(\delta,1)$ is obtained, the final state is reconstructed as
\begin{equation}\label{eq:final_proj}
    Y = \mu_p + \sigma(S e+Z_\perp)\ .
\end{equation}
One of the major distinctions between RMC and GRS lies in how they transform $\widehat S\sim\mathcal{N}(0,1)$ into $S\sim\mathcal{N}(\delta,1)$.

% \begin{algorithm}[H]
% \caption{\texttt{Rank-1 Projector}}
% \label{alg:rank-one-projector}
% \begin{algorithmic}[1]
% \small
% \REQUIRE proposal sample $\widehat Y\sim\mathbb{P}=\mathcal{N}(\mu_p,\sigma^2\mathbb{I}_d)$; target mean $\mu_q$; shared scale $\sigma$
% \ENSURE scalar projection $\widehat S$; orthogonal component $Z_\perp$; displacement magnitude $\delta$; displacement direction $e$

% \STATE $\Delta \gets (\mu_q-\mu_p)/\sigma$
% \STATE $\delta \gets \lVert\Delta\rVert$
% \STATE $e \gets \Delta/\delta$

% \STATE $Z \gets (\widehat Y-\mu_p)/\sigma$
% \STATE $\widehat S \gets e^\top Z$
% \COMMENT{projection parallel to $e$}
% \STATE $Z_\perp \gets (\mathbb{I}_d-ee^\top)Z$
% \COMMENT{projection orthogonal to $e$}

% \RETURN $\widehat S,Z_\perp,\delta,e$
% \end{algorithmic}
% \end{algorithm}

\subsection{Reflection Maximal Coupling (RMC)}
RMC~\citep{de2025accelerated} instantiates the speculative sampling framework using a {\em chain topology} and {\em reflection maximal coupling} for acceptance. Specifically, the drafting phase sequentially generates a chain of $L$ proposal states, where $L$ is referred to as the \emph{lookahead}. During the acceptance phase, each proposal is maximally coupled with the corresponding target transition, thereby minimizing the probability of rejecting the drafted state.

\begin{algorithm}[t]
\caption{Reflection maximal coupling (RMC)}
\label{alg:rmc}
\begin{algorithmic}[1]
\small
\REQUIRE proposal sample
$\widehat Y\sim\mathbb{P}
=\mathcal{N}(\mu_p,\sigma^2\mathbb{I}_d)$;
target mean $\mu_q$;
shared scale $\sigma$
\ENSURE target sample
$Y\sim\mathbb{Q}
=\mathcal{N}(\mu_q,\sigma^2\mathbb{I}_d)$;
acceptance indicator $\texttt{accepted}\in\{\texttt{true},\texttt{false}\}$

\STATE Obtain $(\widehat S,Z_\perp,\delta,e)$ from \eqref{eq:r1proj-variables} and \eqref{eq:rank1-projector}
\COMMENT{rank-1 projection}
% \gets
% \texttt{Rank-1 Projector}
% (\widehat Y,\mu_p,\mu_q,\sigma)$

\STATE $R\gets
\dfrac{\phi(\widehat S-\delta)}
{\phi(\widehat S)}$
\COMMENT{target-to-proposal likelihood ratio}

\STATE $U\sim\mathrm{Uniform}[0,1]$

\STATE $\texttt{accepted}
\gets
\mathds{1}\{U\leq 1\wedge R\}$

\IF{not $\texttt{accepted}$}
%     \STATE $S\gets\widehat S$
%     \COMMENT{accept proposal}
% \ELSE
    % \STATE $S\gets\delta-\widehat S$
    \STATE $\widehat S\gets\delta-\widehat S$
    \COMMENT{reflect $\widehat S$ about $\delta/2$}
\ENDIF

\STATE $Y\gets\mu_p+\sigma(\widehat S e+Z_\perp)$
\COMMENT{$Y\sim\mathbb Q$}

\RETURN $Y, \;\texttt{accepted}, \; 1$
\end{algorithmic}
\end{algorithm}

Given a proposal $\widehat Y$, RMC first decomposes it into a scalar proposal $\widehat S$ along the displacement direction and an orthogonal component $Z_\perp$ using~\eqref{eq:r1proj-variables} and~\eqref{eq:rank1-projector}. The scalar proposal is accepted with probability equal to the target-to-proposal likelihood ratio, capped at one. If accepted, RMC retains $\widehat S$ and proceeds to the next proposal in the chain. Otherwise, it draws an exact sample from the residual distribution and terminates the current speculative trajectory, after which a new lookahead chain is drafted. For Gaussian proposal and target distributions with equal covariance, residual sampling admits a simple geometric implementation: the rejected scalar proposal is reflected about the midpoint between the proposal and target means. The resulting scalar is then combined with the unchanged orthogonal component to reconstruct an exact sample from the target distribution. The complete procedure is summarized in Algorithm~\ref{alg:rmc}. Here, $\phi$ denotes the standard Gaussian density function. In line~$8$, the residual sample is obtained by reflecting $\widehat S$ about the midpoint $\delta/2$, yielding $S=\delta-\widehat S$.

\textit{Drawbacks of RMC:} Despite its simplicity and exactness, RMC is fundamentally constrained by its chain topology. Given a fixed parallel compute budget of $B$ draft evaluations, RMC allocates the entire budget to a single lookahead trajectory of length $B$ (setting $L=B$). Consequently, if a proposal is rejected early in the acceptance phase, all subsequent drafted states become unusable and are discarded, resulting in underutilization of the available parallel compute budget. This inefficiency becomes increasingly pronounced as the compute budget grows. To better exploit parallelism, we instead propose a tree-based draft topology that allocates the same budget across multiple speculative trajectories, as described next.

\subsection{Diffusion Greedy Rejection Sampling (D-GRS)}
GRS departs fundamentally from RMC in both its drafting topology and, consequently, its acceptance criterion.

\begin{figure}[t]
    \centering
    \includegraphics[width=.9\linewidth]{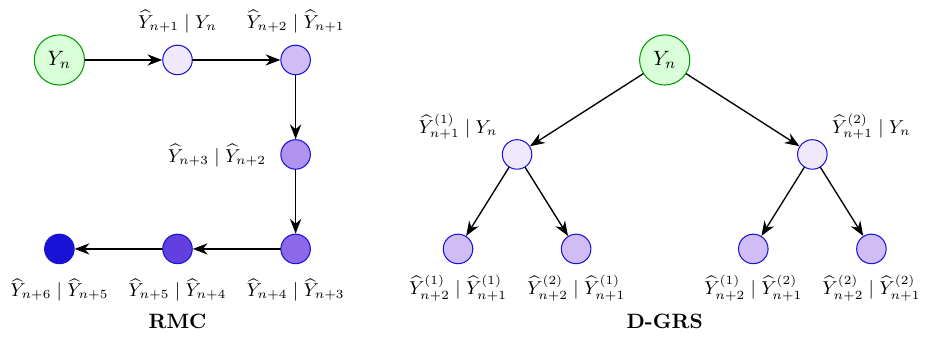}
    \caption{Allocation of a fixed compute budget under chain- and tree-structured drafting for $B=|\Verts|$. Given a compute budget of $W=7$ states, RMC constructs a single proposal chain. If an early proposal is rejected, all downstream proposals become unusable and are discarded. In contrast, D-GRS constructs a $K$-ary draft tree, here with $K=2$ and $L=2$. At each level, D-GRS may select among multiple candidate states, increasing the probability that at least one proposal can be accepted.}
    \label{fig:proposal_tree}
\end{figure}

\textit{Topology:}
The chain employed by RMC represents only one possible allocation of a fixed parallel compute budget. To exploit this budget more effectively, we introduce a more flexible topology based on \emph{draft trees}, depicted in Figure~\ref{fig:proposal_tree}. A $K$-ary draft tree expands every non-leaf node into $K$ children, each representing a candidate realization of the next target state. Suppose drafting begins at time step $n$, and let $Y_n$ denote the current accepted target state, obtained either from the preceding draft tree or through residual sampling. We treat $Y_n$ as the root of the tree and draw $K$ conditionally independent and identically distributed (i.i.d.)\ proposals from the proposal transition kernel at time $n+1$. Each resulting proposal is then recursively expanded into $K$ children, yielding a tree with $L$ levels, excluding the root. Defining $W$ as the number
of target evaluations that fit in a single parallel batch, the pair $(K,L)$ must satisfy
\begin{align}\label{eq:budget}
    B\leq W,
\end{align}
with $B=|\Verts|$ or $B=|\mathcal I(\Tree)|$, based on the verification mode.
% Thus, given a budget $B$ on the total number of states in the tree, the branching factor $K$ and number of levels $L$ are chosen such that
% \begin{align}\label{eq:budget}
%     1+K+\cdots+K^L\leq B\ .
% \end{align}
% \begin{align}\label{eq:budget}
%     B = K+\cdots+K^L=\frac{K^{L+1}-K}{K-1}.
% \end{align}
% Equivalently, excluding the already available root state, the tree contains
% \[
% K+\cdots+K^{L-1}
% =
% \frac{K^L-K}{K-1}
% \]
% drafted states.

During drafting, every node at a given level generates $K$ children using a proposal transition constructed from the target reverse drift already evaluated at the tree root $Y_n$. Since the target model has not yet been evaluated at the intermediate parent states, the same root drift is reused throughout the draft tree. Conditional on a parent state, its $K$ children are sampled independently from this delayed-drift proposal kernel. Once the full tree has been constructed, all drafted states are passed to the target model in parallel, yielding the parent-specific reverse drifts required for verification.

\begin{algorithm}[t]
\caption{Diffusion Greedy Rejection Sampling (D-GRS)}
\label{alg:grs}
\begin{algorithmic}[1]
\small
\REQUIRE proposals
$\widehat Y_{1:K}\overset{\mathrm{i.i.d.}}{\sim}
\mathbb P=\mathcal N(\mu_p,\sigma^2\mathbb I_d)$;
target mean $\mu_q$;
shared scale $\sigma$
\ENSURE target sample
$Y\sim\mathbb Q=\mathcal N(\mu_q,\sigma^2\mathbb I_d)$;
acceptance indicator
$\texttt{accepted}\in\{\texttt{true},\texttt{false}\}$

\FOR{$k=1$ \TO $K$}
    % \STATE Obtain $(\widehat S_k,Z_{\perp,k},\delta,e) 
    % \gets
    % \texttt{Rank-1 Projector}
    % (\widehat Y_k,\mu_p,\mu_q,\sigma)$
    \STATE Obtain $(\widehat S_k, Z_\perp,\delta,e)$ from \eqref{eq:r1proj-variables} and \eqref{eq:rank1-projector}
\COMMENT{rank-1 projection}
\ENDFOR

\STATE $\lambda_0\gets 0$, $S_1\gets 1$

\FOR{$k=1$ \TO $K$}
    \STATE $\rho_k
    \gets
    \dfrac{\phi(\widehat S_k-\delta)}
          {\phi(\widehat S_k)}, \qquad \beta_k
    \gets
    1\wedge
    \dfrac{(\rho_k-\lambda_{k-1})_+}{S_k}$
    % \COMMENT{target-to-proposal likelihood ratio}

    % \STATE $\beta_k
    % \gets
    % 1\wedge
    % \dfrac{(\rho_k-\lambda_{k-1})_+}{S_k}$
    % \COMMENT{greedy acceptance probability}

    \STATE $U_k\sim\mathrm{Uniform}[0,1]$

    \IF{$U_k\leq\beta_k$}
        %\STATE $S\gets\widehat S_k$
        \STATE $Y\gets
        \mu_p+\sigma(\widehat S_k e+Z_{\perp,k})$
        \STATE $\texttt{accepted}\gets\texttt{true}$
        \RETURN $Y,\;\texttt{accepted},\; k$
    \ENDIF

    \STATE $\lambda_k\gets\lambda_{k-1}+G_k$
    \STATE $\mathcal H_k
    \gets
    \{s\in\mathbb R:
    \rho(s)\geq\lambda_k\}$
    \STATE $G_{k+1}
    \gets
    \mathbb Q(\mathcal H_k)
    -
    \lambda_k\mathbb P(\mathcal H_k)$
    \COMMENT{remaining target mass}
\ENDFOR

\STATE $S\sim r_{K+1}$ defined in~\eqref{eq:GRS_residual}
% $r_{K+1}(s)
% =
% (q(s)-\lambda_Kp(s))_+/S_{K+1}$
% \COMMENT{sample from residual}

\STATE $Y\gets
\mu_p+\sigma(S e+Z_{\perp,1})$
\STATE $\texttt{accepted}\gets\texttt{false}$
\RETURN $Y,\;\texttt{accepted},\; K+1$
\end{algorithmic}
\end{algorithm}

\textit{Acceptance:}
The introduction of a tree topology fundamentally changes the acceptance problem. In contrast to RMC, where each proposed state is individually coupled with the target through a maximal coupling, tree-based speculative sampling requires selecting one state from a \emph{list of proposals} and coupling the selected state with the target. This naturally gives rise to a \emph{list-coupling} problem. More precisely, maximal coupling minimizes the Hamming cost $\mathds{1}\{\widehat Y\neq Y\}$ between a single proposal $\widehat Y\sim\mathbb P$ and a target sample $Y\sim\mathbb Q$. At each node of our draft tree, however, we are given $K$ conditionally i.i.d.\ proposals $\widehat Y_1,\ldots,\widehat Y_K\sim\mathbb P$ and seek a coupling that minimizes the membership cost $\mathds{1}\bigl\{Y\notin\{\widehat Y_1,\ldots,\widehat Y_K\}\bigr\}$.
The optimal transport formulation of~\citet{sun2023spectr} characterizes the coupling that minimizes this cost. However, computing the optimal transport plan requires solving a linear program over the target domain, whose complexity grows exponentially with the number of proposals $K$. This is already costly for finite discrete state spaces and does not admit a tractable direct extension to continuous diffusion transitions. Computationally efficient approaches to list coupling have been studied in the context of language models; we provide a survey in the appendix. These methods, however, do not remain computationally tractable for continuous diffusion models.

As a workaround, we draw inspiration from relative entropy coding (REC)~\citep{li2024channel, flamich2026stochastic}, which is closely connected to speculative sampling, as elaborated below.
REC is a class of algorithms that tackles a \emph{remote sampling} problem:
a sender, which has access to a target distribution $\mathbb{Q}$ and a proposal distribution $\mathbb{P}$, communicates an index to a receiver, which shares $\mathbb{P}$, allowing the receiver to reproduce an \emph{exact sample} from $\mathbb{Q}$.
The fundamental communication cost of the index is governed, up to lower-order terms, by the relative entropy $D_{\mathrm{KL}}(\mathbb{Q}\|\mathbb{P})$, hence the name REC.
Correspondingly, REC implementations, which can be based on Poisson processes~\citep{li2018strong, liu2024universal}, dithered quantization~\citep{zamir1992universal}, or rejection sampling~\citep{harsha2010communication, flamich2023adaptive, flamich2024greedy}, need to examine $\tilde \Omega(\exp(D_{\mathrm{KL}}(\mathbb{Q}\|\mathbb{P})))$ shared proposals before identifying the communicated sample.
This viewpoint parallels speculative sampling: both problems seek to couple proposal samples with an exact target sample while maximizing the probability that the target can be recovered from the available proposals. While RMC optimizes the single-proposal overlap, measured by the total variation distance, REC algorithms construct exact couplings by processing an {\em ordered sequence} of proposal samples. Motivated by this connection, we replace the unrestricted list-coupling problem with an order-preserving \emph{sequence-coupling} problem. In list coupling, the $K$ proposals may be jointly matched to the target without regard to their ordering; in sequence coupling, proposals are examined in their sampled order, and the first proposal satisfying the acceptance rule is selected.

We then adapt the GRS procedure underlying the communication protocol of~\citet{harsha2010communication} to continuous diffusion transitions and embed it within our draft-tree topology. At each node, GRS sequentially examines the $K$ verified children, greedily assigns as much remaining target mass as possible to each proposal, and accepts the first proposal selected by the resulting coupling. If none of the $K$ children is accepted, the algorithm samples from the remaining target residual and terminates the current speculative path. More specifically, GRS maintains a sequence of scalar levels
$\{\lambda_k\}_{k=0}^K$, together with residual masses
$\{G_k\}_{k=1}^{K+1}$. The level $\lambda_k$ records how much of the
target-to-proposal likelihood ratio has already been assigned to the first
$k$ proposals, while $G_{k+1}$ denotes the target mass that remains
unassigned after round $k$. Initially, no target mass has been assigned, so
$\lambda_0=0$ and $G_1=1$. Let $\rho(s):=\frac{\mathrm d\mathbb Q}{\mathrm d\mathbb P}(s)=\frac{\phi(s-\delta)}{\phi(s)}$
% \[
% \rho(s)
% :=
% \frac{\mathrm d\mathbb Q}{\mathrm d\mathbb P}(s)
% =
% \frac{\phi(s-\delta)}{\phi(s)}
% \]
denote the target-to-proposal likelihood ratio in the one-dimensional
coordinate. At round $k$, GRS considers only the portion of
$\rho(\widehat S_k)$ that remains above the previously assigned level
$\lambda_{k-1}$. The proposal is therefore accepted with probability 
% $\beta_k=1\wedge\frac{\bigl(\rho(\widehat S_k)-\lambda_{k-1}\bigr)_+}{S_k}$.
\[
\beta_k
:=
1\wedge
\frac{\bigl(\rho(\widehat S_k)-\lambda_{k-1}\bigr)_+}{S_k}\ .
\]
This greedily assigns the largest admissible slice of the remaining target
mass to the current proposal while ensuring that the acceptance probability
does not exceed one. If the proposal is rejected, the assigned level is
increased to $\lambda_k=\lambda_{k-1}+G_k$, which induces the super-level set
\[
\mathcal H_k
=
\{s\in\mathbb R:\rho(s)\geq\lambda_k\}\ .
\]
The remaining target mass is then $G_{k+1} = \mathbb Q(\mathcal H_k) -\lambda_k\mathbb{P}(\mathcal H_k)$.
% \[
% S_{k+1}
% =
% \mathbb Q(\mathcal H_k)
% -
% \lambda_k\mathbb P(\mathcal H_k)
% =
% \int_{\mathbb R}
% \bigl(q(s)-\lambda_k p(s)\bigr)_+
% \,\diff s\ .
% \]
After $K$ unsuccessful rounds, GRS samples from the normalized residual
measure with density
\begin{align}
\label{eq:GRS_residual}
r_{K+1}(s)
=
\frac{\bigl(q(s)-\lambda_Kp(s)\bigr)_+}{G_{K+1}}\ ,
\end{align}
thereby preserving exactness. The D-GRS procedure is summarized in
Algorithm~\ref{alg:grs}. 

\section{Performance Analysis}
\label{sec:performance_analysis}
In this section, we provide performance analyses of the proposed D-GRS algorithm, and compare it to the RMC baseline. Specifically, we show the {\em exactness} of D-GRS and compute its
{\em acceptance probability}. The complete proofs are deferred to Appendix~\ref{sec:perf_analysis_appendix}. 
%For a target measure $\mathbb{Q}=\mathcal N \left( m^q,\sigma^2\mathbb I_d \right)$ and a proposal measure $\mathbb{P}= \mathcal N \left( m^p,\sigma^2\mathbb I_d \right)$, we denote $\rho:=\mathrm \diff\mathbb{Q}/\diff\mathbb{P}$ as their likelihood ratio. 
% Consider Algorithm~\ref{alg:grs} with $X_1,\cdots,X_K\stackrel{\mathrm{i.i.d.}}{\sim}\mathbb{P}$. If one of the first $M$ proposals is accepted, the algorithm returns that proposal. If all $M$ proposals are rejected, it samples from $\rho_M$. 
% It can be shown that the algorithm output $Y$ satisfies $Y\sim Q$, i.e., the output distribution is {\em {exact}}.

\begin{theorem}[Exactness]
\label{thm:exactness}
The output $Y$ of Algorithm~\ref{alg:grs} with target distribution $\mathbb{Q}=\mathcal{N}(\mu_q;\sigma^2\mathbb{I}_d)$ satisfies $Y\sim\mathbb{Q}$. 
\end{theorem}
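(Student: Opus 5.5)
The plan is to reduce to the scalar coordinate and then show that greedy rejection sampling outputs an exact draw from $\mathcal N(\delta,1)$.

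\textbf{Step 1 (rank-1 reduction).} By the discussion in the rank-1 projection paragraph, each proposal splits as $\widehat Z_k=\widehat S_k e+Z_{\perp,k}$, where $\widehat S_k\sim\mathcal N(0,1)$ is independent of $Z_{\perp,k}\sim\mathcal N(0,\mathbb I_d-ee^\top)$. The $K$ pairs are i.i.d. The accept/reject decisions and the residual draw depend only on $(\widehat S_{1:K},U_{1:K})$ and on fresh residual randomness. Let $\tau\in\{1,\dots,K+1\}$ be the returned index and $S$ the returned scalar. If $\tau=k\le K$, the output is $\mu_p+\sigma(\widehat S_k e+Z_{\perp,k})$. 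If $\tau=K+1$, it is $\mu_p+\sigma(Se+Z_{\perp,1})$. Conditional on $\tau$ and $S$, the orthogonal component $Z_{\perp,\tau\wedge 1\ldots}$ (namely $Z_{\perp,k}$, or $Z_{\perp,1}$) is still $\mathcal N(0,\mathbb I_d-ee^\top)$. This holds because the $Z_\perp$'s never enter the decision rule, and each $Z_{\perp,k}$ is independent of all scalars. Hence $Y\sim\mathbb Q$ follows once we show $S\sim\mathcal N(\delta,1)$ and that $S$ is independent of the selected orthogonal part. Mapping through~\eqref{eq:final_proj} then gives $\mathcal N(\mu_q,\sigma^2\mathbb I_d)$. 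The independence of $S$ and the orthogonal part is immediate from the argument just given.

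\textbf{Step 2 (GRS exactness in one dimension).} Write $p=\phi$, $q=\phi(\cdot-\delta)$ and $\rho=q/p$. Define the cumulative accepted sub-density after $k$ rounds by $a_k(s)\,\mathrm ds=\Pr(\tau\le k,\ S\in\mathrm ds)$. I will prove by induction that
\[
a_k(s)=\min\{q(s),\lambda_k p(s)\},\qquad \Pr(\tau>k)=G_{k+1}=\int\bigl(q-\lambda_k p\bigr)_+ .
\]
Reading Algorithm~\ref{alg:grs}, $S_k$ in the acceptance probability is the survival mass $G_k$.

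The base case $k=0$ is trivial: $a_0=0$, $\lambda_0=0$, and $G_1=1$.

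For the inductive step, on $\{\tau>k-1\}$, which has probability $G_k$, the fresh proposal $\widehat S_k\sim p$ is independent of the past. It is accepted with probability $\min\{1,(\rho-\lambda_{k-1})_+/G_k\}$. The new accepted density is therefore
\[
G_k\,p(s)\min\Bigl\{1,\tfrac{(\rho(s)-\lambda_{k-1})_+}{G_k}\Bigr\}=\min\bigl\{G_k p(s),\,(q(s)-\lambda_{k-1}p(s))_+\bigr\}.
\]
Adding this to $a_{k-1}=\min\{q,\lambda_{k-1}p\}$ and checking pointwise gives $\min\{q,(\lambda_{k-1}+G_k)p\}=\min\{q,\lambda_k p\}$. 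The pointwise check splits into the cases $\rho\le\lambda_{k-1}$, $\lambda_{k-1}<\rho\le\lambda_k$, and $\rho>\lambda_k$. Integrating then gives $\Pr(\tau>k)=1-\int\min\{q,\lambda_k p\}=\int(q-\lambda_k p)_+$. Since $\mathcal H_k=\{\rho\ge\lambda_k\}$, this equals $\mathbb Q(\mathcal H_k)-\lambda_k\mathbb P(\mathcal H_k)$, matching the update for $G_{k+1}$ in the algorithm.

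\textbf{Step 3 (residual).} After $K$ rounds, the residual draw contributes $G_{K+1}\,r_{K+1}(s)=(q-\lambda_K p)_+$ by~\eqref{eq:GRS_residual}. The total density of $S$ is then $\min\{q,\lambda_K p\}+(q-\lambda_K p)_+=q$, which is the claim.

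The main obstacle is the pointwise identity in the inductive step, together with the degenerate case $G_k=0$, where we set $\beta_k=1$ or note the event is null. The remaining care is bookkeeping: the decision events must depend only on scalars drawn independently of the orthogonal components, so that conditioning on $\tau$ does not bias $Z_\perp$.
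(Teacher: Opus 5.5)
Your proposal is correct and follows the same route as the paper. Both arguments rest on the residual-measure identity: after $k$ rounds the accepted sub-density is $\min\{q,\lambda_k p\}$, equivalently the unassigned target mass is $(q-\lambda_k p)_+$ with total $G_{k+1}$, and adding the residual branch $G_{K+1}r_{K+1}=(q-\lambda_K p)_+$ then recovers $q$. The differences are only in completeness. You prove that identity by a pointwise induction rather than citing Lemmas III.1--2 of Flamich et al. You also explicitly justify the passage from the scalar coordinate back to $\mathbb R^d$, including the reuse of $Z_{\perp,1}$ in the residual branch, which the paper's $d$-dimensional argument leaves implicit.
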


The proof follows from the residual-measure characterization of GRS
\citep[Lemmas~III.1-2]{flamich2023adaptive}: one can explicitly calculate the target mass generated by the first $K$ proposal branches, and the amount contributed by a sample from its normalized residual distribution. The proposal and residual branches together recover the target distribution $\mathbb{Q}$ exactly. Next, we explicitly characterize the probability that Algorithm~\ref{alg:grs} accepts at least one drafted child.
\begin{theorem}[Acceptance Probability]
\label{thm:acceptance_probability}
Denote $\delta:=\frac{\left\|m^q-m^p\right\|_2}{\sigma}$. For Algorithm~\ref{alg:grs}, the probability of accepting at least one of the $K$ proposals is given by
\begin{equation}
\mathbb P(\exists k\in\{1,\cdots,K\}: Y=\widehat Y_k)
=
1-G_{K+1}\ .
\label{eq::grs_acceptance_probability}
\end{equation}
Furthermore, for any $\delta\in\mathbb{R}_+$, we have
\begin{align}
    G_{K+1}=\overline\Phi\left(\frac{1}{\delta}\ln \lambda_K - \frac{\delta}{2}\right) - \lambda_K \overline\Phi\left(\frac{1}{\delta}\ln \lambda_K + \frac{\delta}{2}\right),
\end{align}
where $\overline{\Phi}(x)$ is the standard Gaussian survival function. 
\end{theorem}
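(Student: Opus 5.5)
We prove the two claims separately. Both reduce, through the rank-1 projection of Section~\ref{par:rank1}, to the one-dimensional problem with $\mathbb P=\mathcal N(0,1)$, $\mathbb Q=\mathcal N(\delta,1)$ and $\rho(s)=\phi(s-\delta)/\phi(s)=\exp(\delta s-\delta^2/2)$. The orthogonal component $Z_\perp$ is copied unchanged, so the event $\{Y=\widehat Y_k\}$ coincides with acceptance at round $k$. We identify the algorithm's normaliser $S_k$ with the residual mass $G_k$ (with $S_1=G_1=1$).

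\textbf{First identity.} We use the residual-measure characterization of GRS from \citep[Lemmas~III.1--2]{flamich2023adaptive}, already invoked for Theorem~\ref{thm:exactness}. It gives the following facts:
\begin{enumerate}
\item The probability of reaching round $k$ (all earlier proposals rejected) equals $G_k$.
\item Conditioned on reaching round $k$, the still-unassigned target mass has sub-density $(q-\lambda_{k-1}p)_+$ with total mass $G_k$.
\item The probability of accepting at round $k$, jointly with $\widehat S_k\in A$, is $\int_A \min\{(q-\lambda_{k-1}p)_+,\,G_k p\}$.
\end{enumerate}
Set $G_{k+1}:=\int(q-\lambda_k p)_+$. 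Then $\int_{\mathcal H_k}(q-\lambda_k p)=\mathbb Q(\mathcal H_k)-\lambda_k\mathbb P(\mathcal H_k)$, which matches Algorithm~\ref{alg:grs}. By induction the probability of rejecting all $K$ proposals is $G_{K+1}$. This is also the total mass of the residual branch~\eqref{eq:GRS_residual}. The event of accepting some proposal is the complement of the residual branch, which gives~\eqref{eq::grs_acceptance_probability}.

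\textbf{Closed form.} This is a direct Gaussian computation. Because $\rho$ is increasing in $s$ for $\delta>0$, the super-level set is a half-line:
\[
\mathcal H_K=\{s:\delta s-\delta^2/2\ge \ln\lambda_K\}=[t,\infty),\qquad t=\frac{\ln\lambda_K}{\delta}+\frac{\delta}{2}.
\]
This gives $\mathbb P(\mathcal H_K)=\overline\Phi(t)$. It also gives $\mathbb Q(\mathcal H_K)=\overline\Phi(t-\delta)=\overline\Phi(\tfrac{1}{\delta}\ln\lambda_K-\tfrac{\delta}{2})$. Substituting both into $G_{K+1}=\mathbb Q(\mathcal H_K)-\lambda_K\mathbb P(\mathcal H_K)$ yields the stated formula.

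\textbf{Edge cases.} The case $\lambda_K=0$ (e.g.\ $K=0$) uses the convention $\ln 0=-\infty$, which gives $G=1$. The case $\delta=0$ is a limit: $\mathbb P=\mathbb Q$, so acceptance occurs at the first round and $G_2=0$. The same value is obtained by taking $\delta\downarrow0$ with $\lambda_K\ge1$.

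The main obstacle is verifying the inductive bookkeeping. Specifically, one must show that the sub-probability of acceptance at round $k$ is exactly $\min\{(q-\lambda_{k-1}p)_+,G_kp\}$, and that subtracting it from $(q-\lambda_{k-1}p)_+$ leaves $(q-\lambda_kp)_+$ with $\lambda_k=\lambda_{k-1}+G_k$. The identity needed is
\[
(a-\lambda)_+-\min\{(a-\lambda)_+,\,G\}=(a-\lambda-G)_+ ,
\]
applied with $a=\rho$ and then multiplied by $p$. We would verify this pointwise in $s$ and rely on the cited lemmas for the rest.
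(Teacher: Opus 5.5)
Your proposal is correct and follows the paper's proof. Like the paper, you get $1-G_{K+1}$ from the residual-mass characterization of \citet{flamich2023adaptive}, where the probability of reaching round $j$ equals $G_j$. You then identify the super-level set as a half-line and evaluate $\mathbb Q(\mathcal H_K)-\lambda_K\mathbb P(\mathcal H_K)$ through Gaussian tails. The differences are cosmetic: you work in the projected coordinate $\widehat S\sim\mathcal N(0,1)$ rather than the paper's midpoint-centred statistic $s(y)=e^\top(y-\bar\mu)/\sigma$ on $\mathbb R^d$, and you additionally sketch the inductive bookkeeping (via $(a-\lambda)_+-\min\{(a-\lambda)_+,G\}=(a-\lambda-G)_+$) that the paper simply cites.
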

Note that setting $K=1$ and $\delta\in\mathbb{R}_+$, we obtain
\begin{equation}
\mathbb P(\exists k\in\{1,\cdots,K\}: Y=\widehat Y_k) = 2\overline{\Phi}\left(
\frac{\delta}{2}
\right)\ ,
\label{eq::single_proposal_acceptance}
\end{equation}
the {\em optimal} acceptance probability for maximal coupling which is achieved by RMC~\citep[Proposition $3.1$]{de2025accelerated}.
% Since $S_j=\mathbb P(J\geq j)$, the probability that all first $M$ proposals are rejected is $S_{M+1}$, which gives~\eqref{eq::grs_acceptance_probability}. 
% For the Gaussian case, the likelihood ratio depends only on a one-dimensional projection along $m^q-m^p$; evaluating the corresponding Gaussian tail probabilities yields the recursion in~\eqref{eq::grs_survival_recursion}. When $M=1$, this reduces to the standard reflection maximal-coupling acceptance probability in~\eqref{eq::single_proposal_acceptance}, which coincides with \citep[Proposition 3.1]{de2025accelerated}. 
% The acceptance probabilities for different number of drafters are shown in Figure~\ref{fig:accept_prob}, which shows how the acceptance probability improves upon RMC with increasing $K$. 
% \begin{figure}[t]
%   \centering
%   \includegraphics[width=0.75\linewidth]{Figures/curves/accept_prob.pdf}
%   \caption{Node acceptance probability for different number of drafters. When $K=1$ }
%   \label{fig:accept_prob}
% \end{figure}
The per-step acceptance probabilities for different value of $K$ are plotted in Figure~\ref{fig:accept_prob} in Appendix \ref{sec:acceptance_prob_appendix}.

\section{Experiments}
\label{sec:experiments}

% We evaluate D-GRS against RMC in three different settings: diffusion sampling from Gaussian mixture target with analytically tractable scores, pixel-space conditional and unconditional diffusion sampling over CIFAR-10 ($32\times32$)~\citep{cifar10} and FFHQ ($64\times64$)~\citep{ffhq}, respectively, and text-to-image latent space sampling using Stable Diffusion 3.5 \citep{sd3}.
We evaluate the performance of D-GRS relative to RMC across three generative tasks: (i) diffusion sampling from a Gaussian mixture target utilizing analytically tractable scores; (ii) pixel-space generation, including conditional sampling on CIFAR-10 ($32\times32$)~\citep{cifar10} and unconditional sampling on FFHQ ($64\times64$)~\citep{ffhq} datasets; and (iii) text-to-image latent space sampling via Stable Diffusion 3 (SD3) \citep{esser2024scaling}.

\paragraph{Protocol.}
All methods are evaluated under the \emph{self-speculative} regime of \citep{de2025accelerated} for $B=|\Verts|$. Specifically, the draft is constructed directly from the target denoiser by freezing the velocity field computed at the root of the speculation tree (or the window head for RMC) and reusing it throughout the draft trajectory. Sampling follows the Euler-Maruyama discretization of the linear probability path $x_\sigma = (1-\sigma)x_0 + \sigma\,\xi$, $\xi\sim\mathcal{N}(0,\mathbb{I}_d)$, with
stochasticity level (churn) $\varepsilon\in (0,1]$\footnote{$\varepsilon=0$ recovers the
deterministic flow sampler, rendering speculation vacuous as the acceptance probability drops to zero.}.

\paragraph{Cost metric and matched budgets.}
Following \citep{de2025accelerated} we measure the computational cost in NFEs.
% \textcolor{blue}{\emph{target-model function evaluations} (NFEs) [Already defined!]}. A single evaluation is defined as one forward pass through the target network for a batch of states, independent of the batch size. This appropriately reflects the constraints of the latency-bound regime that speculative sampling is designed to address.
To compare D-GRS and RMC at equal cost per target call, we evaluate every D-GRS configuration $(K,L)$ against an RMC chain of length $B$. Both methods therefore process an identical verification batch of size $B$ per round, differing only in structural allocation. Consequently, the proposal budget $B$ serves as the natural $x$-axis throughout our comparisons. All costs are reported as expected NFE per trajectory/image. 
%representing the invariant, standalone cost of generating a single sample.

%============================================================================
\subsection{Gaussian Mixture Target}
\label{sec:exp-gm}

\paragraph{Setting.}
We reproduce the low-dimensional setting of \citep[App.~I.2]{de2025accelerated}. The target is a mixture of $5$
isotropic Gaussians in $\mathbb{R}^{d}, d=\gmdim$, with $\mu_i \sim \mathcal{U}\!\left([-2,2]^d\right)$ and $\sigma_i \sim \mathcal{U}\!\left([0.10,\,0.25]\right)$
% \begin{equation}
%   p_0 \;=\; \frac{1}{5}\sum_{c=1}^{15}\mathcal{N}\!\left(\mu_c,\ s_c^2 I_d\right),
%   \quad
%   \mu_c \sim \mathcal{U}\!\left([-2,2]^d\right),\quad
%   s_c \sim \mathcal{U}\!\left([0.10,\,0.25]\right),
% \end{equation}
drawn once from a fixed seed
and then held fixed across all runs. We use the \emph{exact} velocity field which, under the linear probability path, is available in closed form. 
We use $T=30$ generation steps with stochasticity $\varepsilon=\gmchurn$. We sweep the full grid $L\in\{2,\dots,7\}\times K\in\{1,\dots,7\}$, %which spans budgets from $B=1$ to $\approx 9.6\times10^{5}$, 
and run $100$ independent trajectories per entry. Speedups are reported relative to the reference cost of the standard sampler, $(K,L)=(1,1)$,
which spends $30$ target calls per trajectory.

\paragraph{Results.}
Figure~\ref{fig:gm-heatmaps} shows the expected speedups across the $(K,L)$ grid and corresponding budgets $B$. RMC speedup plateaus ($\approx 1.96\times$--$2.00\times$) across six orders of magnitude in $B$, bottlenecked by the geometric decay of its linear sequential look-ahead.
Conversely, D-GRS allocates budget to $K$ proposals per node to boost per-step acceptance, avoiding the rapid saturation of look-ahead scaling. For $L \ge 3$, D-GRS strictly dominates, yielding a $2.33\times$ speedup at $(6,6)$ compared to RMC's $1.99\times$. Overall, D-GRS reduces target calls by 8--9\% ($2.14\times$ vs.\ $1.96\times$ at $B=155$) without compromising exactness.
% Overall, because dimensionality $d$, step size $\gamma$, and stochasticity $\varepsilon$ jointly widen $\delta/\sigma$. The
% wider the $\delta/\sigma$, the lower the per-step acceptance probability, and the more each additional child is worth. 
%This shifts the D-GRS crossover toward lower budgets. 

\begin{figure}[t]
  \centering
  \includegraphics[width=0.9\linewidth]{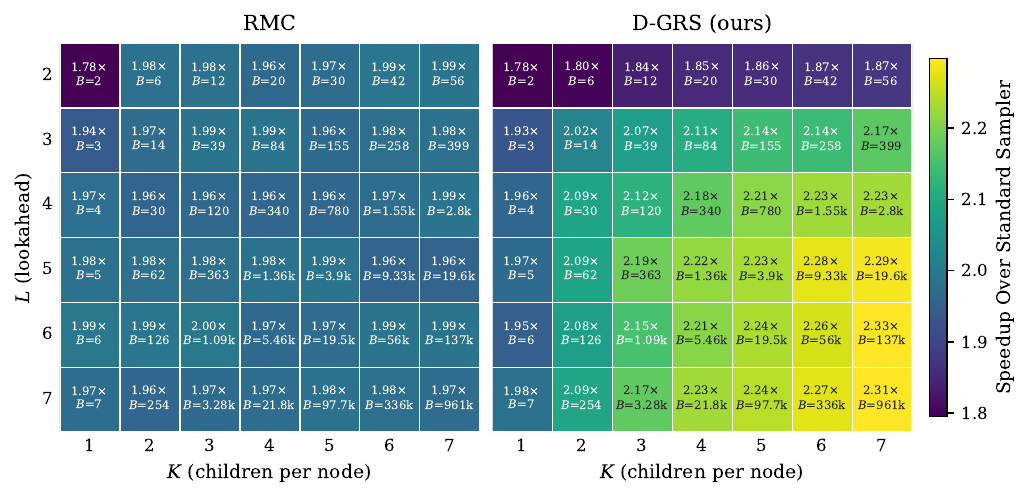}
  \caption{Expected speedup in the Gaussian mixture setting over $100$ independent trajectories per $(K,L)$ pairs. Performance for RMC (left) remains uniform across the grid, whereas D-GRS (right) exhibits steady improvement as $B$ is increasingly allocated to width $K$.}
  \label{fig:gm-heatmaps}
\end{figure}

% ============================================================================
\subsection{Pixel-Space Diffusion}
\label{sec:exp-pixel-space}
For both conditional and unconditional diffusion sampling over the pixel-space, we adopt pretrained denoisers provided by \citet{karras2022elucidating}. At time step $t$, the model $D_\theta(x_t;\varsigma) = \mathbb{E}[x_0 \mid x_t = x_0 + \varsigma\xi]$ estimates the uncorrupted data $x_0$ from a noisy observation $x_t$, perturbed by standard Gaussian noise $\xi$ at a noise scale $\varsigma$. To integrate this denoiser into our setting, we recast it as a velocity field $v_\theta(x_t, t)$ along a linear interpolant. By applying $\varsigma = t/(1-t)$, the velocity field evaluates to $v_\theta(x_t,t) = \frac{1}{t}x_t - D_\theta\left(\frac{x_t}{1-t}; \frac{t}{1-t}\right)$.
% \begin{equation*}
%     v_\theta(x_t,t) = \frac{x_t - D_\theta\left(\frac{x_t}{1-t}; \frac{t}{1-t}\right)}{t}.
% \end{equation*}
Finally, to ensure numerical stability during generation, the evaluated noise scale is clipped to $\varsigma \in [10^{-3}, 80]$, which remains strictly within the original model's native sampling range.

\begin{figure}[t]
    \centering
    \includegraphics[width=0.8\linewidth]{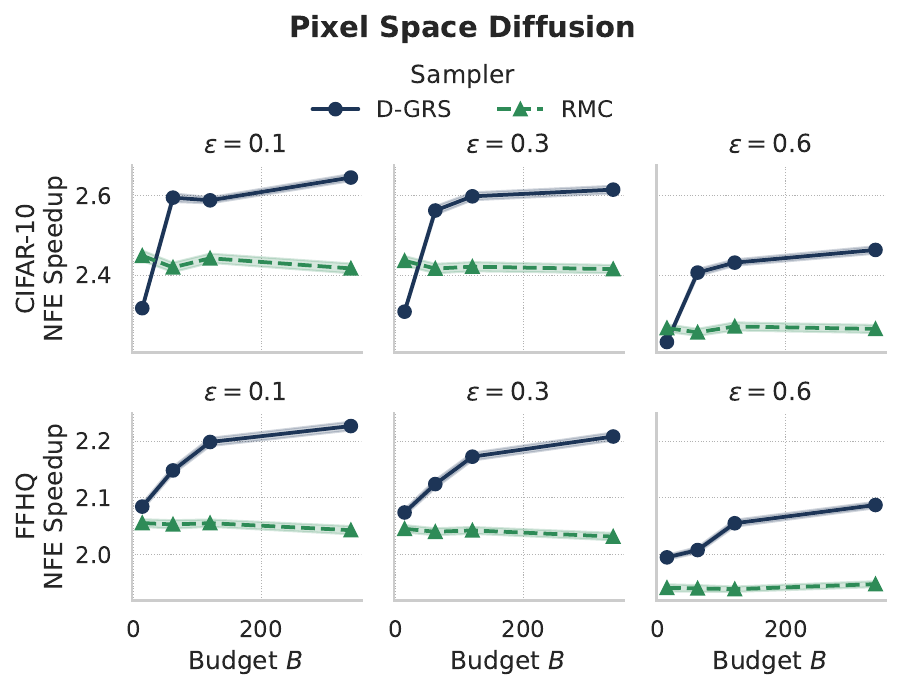}
    \caption{NFE speed-up relative to proposal budget $B$ for CIFAR-10 (top) and FFHQ (bottom) at varying stochasticity $\epsilon \in \{0.1, 0.3, 0.6\}$.}
    \label{fig:pixel-space-results}
\end{figure}

\paragraph{Experimental Setup.} Each sampler generates $100$ images over $100$ time steps. For CIFAR-10, the generation is conditioned on labels sampled uniformly at random from the ten available classes. We evaluate four configuration pairs $(K,L) \in \{(2,3),\allowbreak (2,5),\allowbreak (3,4),\allowbreak (4,4)\}$, which correspond to computational budgets $B \in \{14, 62, 120, 340\}$, with RMC executed on the matched windows $(1, B)$. This procedure is repeated across three stochasticity levels, $\varepsilon \in \{0.1, 0.3, 0.6\}$. Figure~\ref{fig:pixel-space-results} illustrates the resulting mean and standard deviation of the speedups relative to the baseline standard sampler, which incurs a fixed cost of $100$ NFEs per image.

\paragraph{Results.} In Figure~\ref{fig:pixel-space-results}, we report the NFE speedups per image against the computational budget $B$ for two datasets with distinct spatial resolutions: CIFAR-10 ($32 \times 32$, top row) and FFHQ ($64 \times 64$, bottom row). Across both tasks, increasing the churn level degrades the sampling efficiency of both methods. Specifically, higher stochasticity increases the divergence between the draft and target transition kernels, widening the gap between their respective means, thereby attenuating the sampling acceptance rates across both coupling mechanisms.

On CIFAR-10, RMC exhibits a marginal empirical advantage at the minimal budget ($B=14$), where its linear sequential topology achieves greater effective depth compared to the shallow branching of a tree. However, for $B\geq 62$ the efficiency trend reverses: RMC suffers from plateaued scaling, yielding a fixed speedup irrespective of $B$, whereas D-GRS efficiently leverages its speculative tree topology to maintain monotonically increasing speedups.

On the higher-dimensional FFHQ benchmark, absolute speedups are inherently reduced ($\approx 2.0$--$2.2\times$, down from $2.2$--$2.6\times$ on CIFAR-10), as the mean discrepancy between the draft and target kernels scales with $\sqrt{d}$. Crucially, the relative advantage of D-GRS becomes absolute in this higher dimensional regime, strictly dominating RMC at \emph{every} tested budget and churn level.
% To verify exactness, we evaluate FID on 50k samples ($K=2, L=3, \varepsilon=0.3$). On CIFAR-10, D-GRS (1.824) and RMC (1.822) closely match the 1.79 baseline \citep{karras2022elucidating}. On FFHQ, D-GRS (2.86) and RMC (2.88) perform slightly above the 2.39 baseline \citep{karras2022elucidating}.
To empirically verify exactness, we evaluate FID on 50,000 samples generated with $K=2$, $L=3$, and $\varepsilon=0.3$. On CIFAR-10, D-GRS and RMC achieve FID scores of 1.824 and 1.822, respectively, closely tracking the 1.79 baseline from \citet{karras2022elucidating}. On FFHQ, D-GRS and RMC yield respective scores of 2.86 and 2.88, performing comparably to, though slightly above, the 2.39 baseline \citep{karras2022elucidating}.

% \begin{figure}[t]
%     \centering
%     \begin{minipage}[t]{0.48\linewidth}
%         \centering
%         \includegraphics[width=0.91\linewidth]{Figures/experiments/pixel-space/speedup-pixel-space.pdf}
%         \caption{NFE speed-up relative to proposal budget $B$ for CIFAR-10 (top) and FFHQ (bottom) at varying stochasticity $\epsilon \in \{0.1, 0.3, 0.6\}$.}
%         \label{fig:pixel-space-results}
%     \end{minipage}\hfill
%     \begin{minipage}[t]{0.48\linewidth}
%         \centering
%         \includegraphics[width=0.7\linewidth]{Figures/experiments/latent-space/speedup-sd3.pdf}
%         \caption{Expected NFE speed-up over the vanilla target sampler as a function of $K$ ($512\times512$ resolution, $\varepsilon=0.8$, $L=2$).}
%         \label{fig:sd3-speedups}
%     \end{minipage}
% \end{figure}

\begin{figure}[t]
    \centering
        \includegraphics[width=0.5\linewidth]{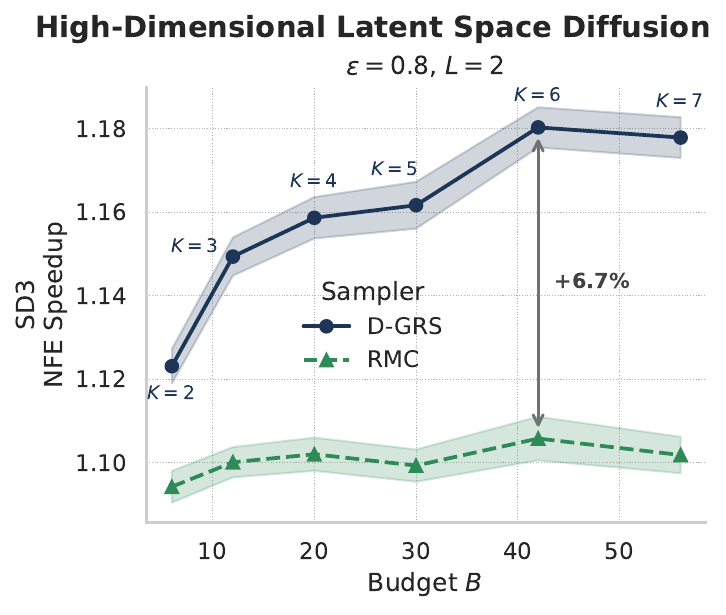}
        \caption{Expected NFE speed-up over the vanilla target sampler as a function of $K$ ($512\times512$ resolution, $\varepsilon=0.8$, $L=2$).}
        \label{fig:sd3-speedups}
\end{figure}

\begin{table}[t]
\footnotesize
\setlength{\tabcolsep}{4pt}
\centering
\begin{tabular}{lcccccc}
\toprule
& \multicolumn{6}{c}{$K\quad$ - $\quad$ Target score: $ 26.90 \pm 0.37$} \\ \cmidrule(lr){2-7}
\textbf{Method} & 2 & 3 & 4 & 5 & 6 & 7 \\
\midrule
%Target & \multicolumn{6}{c}{$26.90 \pm 0.37$} \\

D-GRS & $26.78 \pm 0.34$ & $26.32 \pm 0.34$ & $26.67 \pm 0.34$ & $26.72 \pm 0.35$ & $26.75 \pm 0.33$ & $26.65 \pm 0.35$ \\
RMC & $26.59 \pm 0.35$ & $26.74 \pm 0.33$ & $26.48 \pm 0.34$ & $26.77 \pm 0.34$ & $26.39 \pm 0.37$ & $26.87 \pm 0.33$ \\
\bottomrule
\end{tabular}
\caption{CLIP scores (mean $\pm$ standard error of the mean over $n=100$ prompts) for $L=2$, comparing D-GRS and RMC across different $K$ values against the target baseline.}
\label{tab:clip_scores}
\end{table}

% Scaling Efficiency in Pixel-Space Diffusion
% \begin{figure}[t]
%     \centering
%     \includegraphics[width=.9\linewidth]{Figures/experiments/pixel-space/speedup-pixel-space.pdf}
%     \caption{NFE speed-up relative to the proposal budget $B$ for CIFAR-10 (top) and FFHQ (bottom) at varying stochasticity levels $\epsilon \in \{0.1, 0.3, 0.6\}$.}
%     \label{fig:pixel-space-results}
% \end{figure}

% ============================================================================
\subsection{High-dimensional Latent Space Diffusion}
\label{sec:exp-sd3}
Stable Diffusion 3.5 (SD3)\footnote{The model is publicly available at Huggingface as \texttt{stabilityai/stable-diffusion-3.5-medium}.} is natively trained via rectified flow. It therefore 
%directly outputs the velocity field $v_\theta(y_n, t_n)$ along the linear probability path defined in Section~\ref{sec:experiments}.
relies on ODE integration, with transitions being deterministic point masses. Consequently, draft and target proposals exhibit a total variation distance of one, rendering speculation vacuous. We resolve this by sampling from a marginal-preserving SDE family along the identical linear probability path, requiring no retraining. The score is computed directly from the velocity as $s_\theta = -(x_\sigma + (1-\sigma) v_\theta)/\sigma$. Adding the drift $-\frac{1}{2}\varepsilon^2 g^2 s_\theta$ (where $g^2(\sigma) = 2\sigma/(1-\sigma)$) and noise of scale $\varepsilon g$ ensures every marginal $p_\sigma$ remains invariant. Under an Euler--Maruyama discretization, each step yields a Gaussian kernel $\mathcal{N}(m, \varsigma^2 I)$. Since the standard deviation $\varsigma = \varepsilon g \sqrt{-\gamma}$ is purely a function of the time schedule, draft and target kernels diverge \emph{only by their means}---the exact mathematical structure required by our coupling framework. A detailed derivation is deferred to Appendix~\ref{sec:sd3_conversion}. 

\paragraph{Experimental Setup.}
We evaluate high-resolution generation at $512\times512$, utilizing $T=50$ generation steps, a free-guidance scale of $7.0$, and a stochasticity level of $\varepsilon=0.8$. To adapt the native resolution to ours, we adopt a timestep shift of $3.0$. Throughout the simulations, we fix the lookahead at $L = 2$ and sweeping the branching factor $K \in \{2,3,4,5,6,7\}$, yielding budgets $B \in \{6, 12, 20, 30, 42, 56\}$. Generative sampling is executed within the VAE latent space of dimension $d=16 \times 64 \times 64 = 65{,}536$. 
All configurations generate $100$ images conditioned on an identical, pre-sampled set of $100$ prompts drawn uniformly without replacement from the COCO 2014 validation set \citep{coco}. %strictly isolating algorithmic effects from prompt variability across all parameter sweeps.

\paragraph{Results.} Figure~\ref{fig:sd3-speedups} illustrates the empirical speed-up over the standard target sampler as a function of $K$ and a constrained lookahead depth of $L=2$. As the number of proposals per node ($K$) scales from $2$ to $7$, the corresponding computational budget $B$ expands from $6$ to $56$. Consistent with prior topological analyses, the proposed method effectively translates this expanded budget into measurable acceleration. By allocating compute toward $K$, D-GRS yields a monotonically increasing speedup that peaks near $1.18\times$ at $K=6$ ($B=42$), with an increment of $6.7\%$ relative to RMC. In contrast, RMC fails to meaningfully exploit the additional budget. Bottlenecked by the geometric decay of its sequential acceptance, the reflection sampler exhibits severe saturation, plateauing between $1.10\times$ and $1.12\times$ across the entire parameter sweep. This widening performance gap clearly demonstrates that scaling multiple proposals per node ($K$) successfully improves per-step acceptance rates, allowing the branching topology to strictly dominate the linear chain in high-dimensional text-to-image synthesis.
To empirically show the exactness of both sampling procedures, we report in Table~\ref{tab:clip_scores} the CLIP scores \citep{clip} for each budget configuration, computed using \texttt{openai/clip-vit-large-patch14}.

\section{Conclusions and Future Work}
We identify a key topological limitation of reflection maximal coupling (RMC), which allocates its speculative budget to a single draft chain. To better utilize parallel computation, we introduce draft trees and propose D-GRS, which adapts greedy rejection sampling from the relative entropy coding literature to the resulting list-coupling problem. Across our experiments, D-GRS improves NFE speed-up over RMC by up to $8.3\%$ in practical settings.

This work is a first step toward more efficient speculative diffusion sampling. Important open directions include overcoming the rapid degradation of exact coupling efficiency with generation dimension and adapting the draft topology online to the proposal quality and available compute budget.

\clearpage
\bibliographystyle{plainnat}
\bibliography{references}
\clearpage

\newpage

% \section{Changes to introduction}
% A major drawback of diffusion models, however, is their high inference cost. Image generation requires repeatedly evaluating a large neural network over tens to hundreds of denoising iterations, making sampling substantially slower. This computational bottleneck has motivated a broad range of acceleration techniques. Existing approaches include distillation-based methods~\citep{salimans2022progressive,meng2023distillation,song2023consistency}, which train a smaller student model, fast samplers that reduce the number of denoising steps~\citep{zhang2022fast}, flow maps~\citep{boffi2026build, geng2026mean, geng2026improved} for few-step generation, and stochastic flow maps~\citep{holderrieth2026diamond,potaptchik2026meta,mccallum2026strong} for few-step controlled generative sampling. While effective in reducing latency, these approaches typically incur either additional training cost or degradation in sample quality~\citep{dieleman2024paradox,karras2022elucidating}.
\appendix

\section{Literature Review}

\subsection{Diffusion Models}

Diffusion models implicitly learn data distributions from empirical samples. The generative process is initialized with a sample from a tractable prior, typically a Gaussian distribution, and iteratively denoises it to recover a sample from the target data distribution~\citep{lai2025principles}. This mechanism is formalized by defining a fixed forward process that progressively corrupts the data with noise, coupled with a parameterized neural network trained to reverse this corruption. Several theoretical frameworks formulate this concept, including the Denoising Diffusion Probabilistic Model (DDPM) \citep{sohl2015deep, ho2020denoising} and noise-conditioned score networks \citep{song2019generative}. These were subsequently unified as discretizations of a single continuous-time stochastic differential equation (SDE) \citep{song2021scorebased}, alongside the more recent flow matching paradigm \citep{lipman2022flow, liu2023flow}. Fundamentally, these formulations are mathematically equivalent up to a transformation of the model's output parameterization and noise schedule~\citep{kingma2021variational, lai2025principles}. A critical practical consequence of this equivalence is that the training objective does not uniquely dictate the sampling algorithm: models trained via denoising can be sampled deterministically using a probability flow ordinary differential equation (ODE) \citep{song2021scorebased}, while models trained under a flow matching objective can employ stochastic sampling schemes \citep{liu2023flow, Hu_2025_CVPR}.
These generative frameworks have established state-of-the-art performance across a diverse array of modalities, encompassing image synthesis~\citep{dhariwal2021diffusion, rombach2022high, karras2022elucidating}, video generation~\citep{ho2022video, bartal2024}, protein design~\citep{Watson2023}, prediction of biomolecular structure~\citep{Abramson2024}, and numerous other applications~\citep{yang2023diffusion}.

Despite their empirical success, a primary limitation of diffusion models is the iterative nature of the sampling process. Efforts to accelerate inference by reducing the number of integration steps have been extensively investigated and generally branch into training-free and training-based methodologies~\citep{dieleman2024paradox, dieleman2026flowmaps, lai2025principles}. Training-free approaches leverage the fixed, pre-trained model and achieve acceleration through the deployment of advanced ODE and SDE solvers that allow for larger step sizes without compromising accuracy~\citep{song2021denoising, zhang2022fast, lu2022dpm, lu2025dpm}. Conversely, training-based methods explicitly optimize a model to execute large synthesis steps. These approaches vary along two primary axes. The first is the source of supervision: they may distill knowledge from a pretrained teacher model~\citep{luhman2021knowledge, salimans2022progressive, song2023consistency, sabour2026align} or train a few-step model entirely from scratch~\citep{song2023consistency, frans2025one, geng2026mean, boffi2026build}. The second axis concerns the matching objective: certain techniques enforce the preservation of trajectories connecting noise to data~\citep{salimans2022progressive, song2023consistency, boffi2025flow}, whereas others constrain the final model output to match the target marginal distribution~\citep{Yin_2024_CVPR, yin2024improved, sauer2024adversarial}. While initially developed for ODE trajectories, these acceleration techniques have recently been generalized to SDEs. This includes methods that learn transition kernels to replicate the process's marginal distributions~\citep{potaptchik2026meta, holderrieth2026diamond}, as well as those that learn the solution map pathwise, conditioned on specific realizations of the driving noise~\citep{mccallum2026strong}.

\subsection{Speculative Sampling for Language Models}
Standard autoregressive decoding approaches, such as top-$k$ sampling~\citep{radford2019language, fan2018hierarchical}, nucleus sampling~\citep{holtzman2020curious}, and the permute-and-flip decoder~\citep{zhao2024permute}, generate tokens one at a time and therefore incur the inherent latency of one target-model call for each token. 
Speculative sampling~\citep{chen2023accelerating, leviathan2023fast} alleviates this bottleneck by employing a smaller draft model to propose multiple future tokens, while the larger target model verifies these proposed continuations in parallel, with a rejection-correction step ensuring that the exact target-model distribution is preserved.
The resulting efficiency gain depends on how closely the draft distribution approximates the target distribution: more accurate proposals yield higher acceptance rates and longer verified blocks.
Recent works have further advanced this paradigm through optimal-transport-based token selection~\citep{sun2023spectr}, tree-based inference and verification~\citep{miao2024specinfer}, and multi-draft architectures~\citep{khisti2024multi, hu2025towards, lin2026spectr}. 
Multi-draft speculative decoding can be viewed as an application of list-coupling schemes~\citep{rowan2026list}, which strictly extend ordinary coupling for speculative decoding~\citep{daliri2025coupling}.
Recently, speculative sampling has been used to accelerating the diffusion modesl~\citep{de2025accelerated, soen2026accelerating}.

\subsection{Relative Entropy Coding}

The sequential coupling methodology introduced here for speculative diffusion is theoretically grounded in greedy rejection sampling~\citep{harsha2010communication}. This technique falls within a broader family of remote coupling methods identified by various terms in the literature, including channel simulation~\citep{li2024channel}, channel synthesis~\citep{cuff2013distributed}, and reverse channel coding~\citep{bennett2002entanglement}. Hereafter, we refer to this class of schemes as relative entropy coding (REC)~\citep{flamich2026stochastic}.
Fundamentally, REC is formulated as a lossy compression problem designed to establish the minimal communication overhead required to simulate a target stochastic channel over a noiseless link. The nomenclature derives from the theoretical result that, in the regime of exact simulation, the fundamental lower bound on this communication cost is governed by a relative entropy measure.

In this work, we employ greedy rejection sampling~\citep{harsha2010communication}, a methodology whose capabilities have been significantly expanded in recent literature~\citep{flamich2023adaptive, flamich2024greedy, hill2026rejection}. A parallel and prominent class of REC frameworks relies on Poisson processes. Rooted in the Poisson functional representation~\citep{li2018strong}-a concept intimately connected to $\mathrm{A}^*$ sampling~\citep{maddison2014sampling, maddison2016poisson}-these schemes have found broad application across diverse domains. Notable examples include hypothesis testing~\citep{guo2024hypothesis}, network information theory~\citep{li2021unified, liu2025one, liu2025nonasymptotic}, and machine learning~\citep{he2024accelerating}. Related methodologies for approximate channel simulation have been proposed by~\citet{block2023sample, kobus2024gaussian, flamich2024some}. Nevertheless, since these approaches yield inexact simulations and therefore fall short of the theoretical relative entropy limit, they are precluded from the standard definition of REC.
% In this work, we utilize greedy rejection sampling~\citep{harsha2010communication}, whose scope has been further advanced by recent works~\citep{flamich2023adaptive, flamich2024greedy, hill2026rejection}. Another important class of REC schemes is based on Poisson processes, starting from the Poisson functional representation~\citep{li2018strong}, which is closely related to $\mathrm{A}^*$ sampling~\citep{maddison2014sampling, maddison2016poisson} and has recently been applied to a wide range of areas, including hypothesis testing~\citep{guo2024hypothesis}, network information theory~\citep{li2021unified, liu2025one, liu2025nonasymptotic}, and machine learning~\citep{he2024accelerating}.
% For related approximate channel simulation schemes, see also~\citep{block2023sample, kobus2024gaussian, flamich2024some}; however, because these schemes are not exact and hence do not achieve the relative entropy limit, they should not be referred to as REC schemes.

\section{On Performance Analysis}\label{sec:perf_analysis_appendix}

In this section, we provide proofs of the theoretical guarantees discussed in Section~\ref{sec:performance_analysis}.
For completeness, we restate the theorems before presenting their proofs.

Throughout this section $\mathbb P=\mathcal N(\mu_p,\sigma^2 I_d)$ and
$\mathbb Q=\mathcal N(\mu_q,\sigma^2 I_d)$ denote the proposal and target
distributions on $\Omega=\mathbb R^d$. Furthermore, let $J$ denote the index of the proposal accepted by Algorithm~\ref{alg:grs}, so that $Y=\widehat Y_J$. Let also $(\lambda_j)_{j\geq 0}$ be an increasing sequence of levels that define superlevel sets of the density ratio $\rho=\frac{\diff \mathbb Q}{\diff \mathbb P}$, i.e.,
$$
\mathcal{H}_j := \left\{y\in\Omega:\rho(y)\geq \lambda_j \right\}.
$$
Finally, $\overline\Phi(x):=1-\Phi(x)$
is the standard Gaussian survival function, and $(x)_+:=\max\{x,0\}$.

\subsection{Proof of Exactness}

\begin{theorem}[Exactness]
\label{thm:exactness}
The output $Y$ of Algorithm~\ref{alg:grs} with target distribution $\mathbb{Q}=\mathcal{N}(\mu_q;\sigma^2\mathbb{I}_d)$ satisfies $Y\sim\mathbb{Q}$. 
\end{theorem}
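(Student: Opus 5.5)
First reduce to one dimension, then prove GRS exactness in one dimension by tracking the residual target mass after each rejection.

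\emph{Reduction to one dimension.} By the rank-1 projection of Section~\ref{par:rank1}, each proposal $\widehat Y_k$ decomposes as $\widehat Y_k=\mu_p+\sigma(\widehat S_k e+Z_{\perp,k})$. Here $\widehat S_k\sim\mathcal N(0,1)$ is independent of $Z_{\perp,k}\sim\mathcal N(0,\mathbb I_d-ee^\top)$, and the pairs are i.i.d.\ across $k$. Under $\mathbb Q$ the same decomposition holds with $S\sim\mathcal N(\delta,1)$ in place of $\widehat S_k$. The accept/reject decisions depend only on $(\widehat S_k,U_k)$, so it suffices to prove two things. First, the output scalar $S$ (either $\widehat S_J$ or the residual draw) has law $\mathcal N(\delta,1)$. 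Second, the orthogonal component attached to the output is $\mathcal N(0,\mathbb I_d-ee^\top)$ and independent of $S$. The second claim holds because the event $\{J=k\}$ and $\widehat S_k$ are functions of the scalars and uniforms only, which are independent of all $Z_{\perp,j}$. So $Z_{\perp,J}$, and likewise $Z_{\perp,1}$ on the residual branch, is still $\mathcal N(0,\mathbb I_d-ee^\top)$ independently of $S$. Reconstruction via~\eqref{eq:final_proj} then gives $Y\sim\mathbb Q$. (I read the algorithm as using $Z_{\perp,k}$ for each $k$.) When $\delta=0$ we have $\rho\equiv1$ and the first proposal is accepted surely, so we assume $\delta>0$.

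\emph{One-dimensional GRS invariant.} Let $p=\phi$, $q=\phi(\cdot-\delta)$ and $\rho=q/p$. I will prove by induction on $k$ the following claim. For every measurable $A$,
\[
\mathbb P(J\ge k,\ \text{output}\in A)=\int_A \bigl(q(s)-\lambda_{k-1}p(s)\bigr)_+\,\diff s ,
\qquad
\mathbb P(J\ge k)=G_k .
\]
Here ``output'' means the scalar eventually returned. This is the residual-measure characterization of \citep[Lemmas~III.1-2]{flamich2023adaptive}, and I will verify it directly. Conditioned on $J\ge k$, the proposal $\widehat S_k\sim p$ is fresh and independent of the past. The algorithm accepts it with probability $\beta_k=1\wedge (\rho(\widehat S_k)-\lambda_{k-1})_+/G_k$; note the algorithm writes $S_k$ for $G_k$ in this formula. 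Hence the sub-density of acceptance at step $k$ is
\[
G_k\, p(s)\,\beta_k(s)=\min\{G_k p(s),\ (q(s)-\lambda_{k-1}p(s))_+\}.
\]
Using $\lambda_k=\lambda_{k-1}+G_k$, this equals $(q-\lambda_{k-1}p)_+-(q-\lambda_k p)_+$, which one checks pointwise by considering the cases $\rho\le\lambda_{k-1}$, $\lambda_{k-1}<\rho<\lambda_k$ and $\rho\ge\lambda_k$. Subtracting the accepted mass from the level-$(k-1)$ residual leaves $(q-\lambda_k p)_+$ for $\{J\ge k+1\}$. Its total mass is $\mathbb Q(\mathcal H_k)-\lambda_k\mathbb P(\mathcal H_k)=G_{k+1}$, matching the algorithm's update. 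The base case $\lambda_0=0$, $G_1=1$ is immediate.

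\emph{Telescoping.} Summing the accepted sub-densities over $k=1,\dots,K$ telescopes to $q-(q-\lambda_K p)_+$. The residual branch samples from $r_{K+1}$ in~\eqref{eq:GRS_residual} with probability $G_{K+1}$, which contributes exactly $(q-\lambda_K p)_+$. The total is $q$, so the output scalar has law $\mathcal N(\delta,1)$, which completes the proof.

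The main obstacle is the pointwise identity $\min\{G_k p,(q-\lambda_{k-1}p)_+\}=(q-\lambda_{k-1}p)_+-(q-\lambda_k p)_+$, together with the bookkeeping that conditioning on earlier rejections leaves $\widehat S_k$ distributed as $p$. Both reduce to independence of the proposals and a three-case check. A further degenerate case must be handled: if some $G_k=0$, then $\beta_k$ involves $0/0$. In that case acceptance has already occurred almost surely, so the case is null and needs no separate treatment.
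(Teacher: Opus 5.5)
\paragraph{Comparison with the paper's proof.} Your argument is correct and has the same skeleton as the paper's. After $k$ rejections the unassigned target sub-density is $(q-\lambda_k p)_+$, with total mass $G_{k+1}=\Pr(J>k)$, and the residual draw fills exactly this gap. The differences lie in what is proved versus assumed.

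The paper works on $\mathbb{R}^d$ with $\rho=\mathrm{d}\mathbb{Q}/\mathrm{d}\mathbb{P}$ and cites Lemmas III.1--2 of Flamich et al.\ for the identity $\mathbb{Q}(A)-\Pr(J\le j,\ \widehat Y_J\in A)=\int_A(\rho-\lambda_j)_+\,\mathrm{d}\mathbb{P}$. You instead derive it from scratch, using the three-case identity $\min\{g,x_+\}=x_+-(x-g)_+$ and telescoping.

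You also make the rank-1 reduction explicit, and this is a genuine addition. Algorithm~\ref{alg:grs} samples the residual only in the scalar coordinate and then attaches $Z_{\perp,1}$. The paper's proof tacitly treats this as a draw from the $d$-dimensional residual measure. Your observation closes that step: $J$ and the returned scalar are functions of $(\widehat S_k,U_k)_k$ and the residual randomness alone, hence independent of $(Z_{\perp,k})_k$.

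The paper's route is shorter and does not use the Gaussian structure at all. Yours is self-contained and matches the algorithm as actually implemented.

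\paragraph{A statement to repair.} Your invariant is phrased for $\Pr(J\ge k,\ \text{output}\in A)$, with ``output'' the scalar eventually returned, and you run the induction forward from $k=1$. At $k=1$ this reads $\Pr(\text{output}\in A)=\int_A q$, which is the scalar form of the theorem itself. So the base case is not immediate, and the forward induction is circular for the measure part; only $\Pr(J\ge 1)=G_1=1$ is immediate. There are two fixes:
\begin{itemize}
\item State the invariant for the not-yet-assigned mass $\int_A q-\Pr(J<k,\ \widehat S_J\in A)$, the scalar analogue of the paper's $\Delta_{k-1}(A)$. For this quantity your subtraction step and trivial base case are valid.
\item Alternatively, induct backward from $k=K+1$, where the residual draw supplies the base case.
\end{itemize}
Nothing downstream is affected. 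Your telescoping paragraph uses only $\Pr(J\ge k)=G_k$ and the acceptance sub-density $\min\{G_kp,(q-\lambda_{k-1}p)_+\}$, and your forward argument does establish both.
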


\begin{proof}
Let $\widehat Y_{1:K}\overset{\mathrm{i.i.d.}}{\sim}\mathbb P$ be $K$ proposals. 
By the residual-measure characterization of greedy rejection sampling~\citep[Lemma III.1-2]{flamich2023adaptive}, for every measurable set $A\subseteq\mathbb R^d$, the target mass that remains after the first $j$ proposals is
\begin{align}
\Delta_j(A)
 &=
\mathbb Q(A) - \mathbb P \left( J\leq j,\, \widehat Y_J\in A \right) \nonumber \\ &=\int_A \left(\rho(y)-\lambda_j\right)_+ \mathbb P(\mathrm dy). \label{eq::target_remain}
\end{align}
Note that if we take $A=\Omega$, we have 
\begin{align}\label{eq::survival}
\Delta_j(\Omega) = 1-\mathbb P(J\leq j) = P(J > j) := G_{j+1}. 
\end{align}
Hence when $G_{j+1}>0$, the normalized residual distribution is 
\begin{equation}\label{eq::normalized_residual}
r_j(A) = \frac{\Delta_j(A)}{G_{j+1}}.
\end{equation}
Therefore, by combining \eqref{eq::target_remain}, \eqref{eq::survival}, and \eqref{eq::normalized_residual}, for every measurable $A\subseteq\mathbb R^d$, we have 
\begin{align*}
\mathbb P(Y\in A) &= \mathbb P \left( J\leq j, \widehat Y_J\in A \right) + \mathbb P(J>j)\ r_j(A)\\
&=
\mathbb Q(A)-\Delta_j(A) + G_{j+1}
\frac{\Delta_j(A)}{G_{j+1}}\\
&=
\mathbb Q(A).
\end{align*}
Therefore $Y\sim \mathbb Q$ exactly. 
\end{proof}

\subsection{Proof of Acceptance Probability}\label{sec:acceptance_prob_appendix}
% \begin{definition}[Residual mass function]
% \label{def:g_delta}
% For $\delta, \lambda>0$ let $g_\delta:[0,\infty)\to[0,1]$ be given by
% \begin{equation}
%   g_\delta(\lambda)
%   :=
%   \overline\Phi\!\left(\frac{\ln\lambda}{\delta}-\frac{\delta}{2}\right)
%   -
%   \lambda\,\overline\Phi\!\left(\frac{\ln\lambda}{\delta}+\frac{\delta}{2}\right),
%   \label{eq::g_delta}
% \end{equation}
% with $g_\delta(0):=1$, which is the limiting value of \eqref{eq::g_delta} as
% $\lambda\downarrow 0$.
% \end{definition}

\begin{theorem}[Acceptance Probability]
Denote 
\begin{align*}
    \Delta:=(\mu_q-\mu_p)/\sigma, \quad \delta:=\|\Delta\|, \quad e:=\Delta/\delta.
\end{align*}
For Algorithm~\ref{alg:grs}, the probability of accepting at least one of the $K$ proposals is given by
\begin{equation}
\mathbb P(\exists k\in\{1,\cdots,K\}: Y=\widehat Y_k)
=
1-G_{K+1}\ .
\label{eq::grs_acceptance_probability}
\end{equation}
Furthermore, for any $\delta\in\mathbb{R}_+$, we have
\begin{align}
    G_{K+1}=\overline\Phi\left(\frac{1}{\delta}\ln \lambda_K - \frac{\delta}{2}\right) - \lambda_K \overline\Phi\left(\frac{1}{\delta}\ln \lambda_K + \frac{\delta}{2}\right),
\end{align}
where $\overline{\Phi}(x)=1-\Phi(x)$ is the standard Gaussian survival function. 
\end{theorem}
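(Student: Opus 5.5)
The plan has two parts: the acceptance identity, which is bookkeeping, and the closed form for $G_{K+1}$, which is a Gaussian integral in one dimension.

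\textbf{Step 1: the acceptance identity.} This follows from the survival identity~\eqref{eq::survival} established in the proof of Theorem~\ref{thm:exactness}.

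The event $\{\exists k\le K: Y=\widehat Y_k\}$ should be read as the event that Algorithm~\ref{alg:grs} returns with $\texttt{accepted}=\texttt{true}$, namely $\{J\le K\}$. Ties $Y=\widehat Y_k$ produced by the residual branch happen with probability zero, since the residual draw has a density.

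Taking $j=K$ in~\eqref{eq::survival} gives $\mathbb P(J\le K)=1-\Delta_K(\Omega)=1-G_{K+1}$. This is~\eqref{eq::grs_acceptance_probability}.

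\textbf{Step 2: reduce $G_{K+1}$ to one dimension.} Starting again from~\eqref{eq::target_remain} with $A=\Omega$,
\[
G_{K+1}=\int(\rho-\lambda_K)_+\,\diff\mathbb P=\mathbb Q(\mathcal H_K)-\lambda_K\,\mathbb P(\mathcal H_K),
\]
where $\mathcal H_K=\{\rho\ge\lambda_K\}$. Both measures are evaluated on the same super-level set, because $(\rho-\lambda_K)_+$ vanishes exactly off $\mathcal H_K$.

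Next I would apply the rank-1 projection of Section~\ref{par:rank1}. With $s=e^\top(y-\mu_p)/\sigma$, the likelihood ratio depends only on $s$:
\[
\rho=\frac{\phi(s-\delta)}{\phi(s)}=\exp\!\left(\delta s-\tfrac{\delta^2}{2}\right).
\]
This is increasing in $s$ for $\delta>0$. Hence, for $\lambda_K>0$,
\[
\mathcal H_K=\{s\ge s^\star\},\qquad s^\star=\frac{\ln\lambda_K}{\delta}+\frac{\delta}{2}.
\]

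\textbf{Step 3: evaluate the two masses.} Under $\mathbb P$ we have $S\sim\mathcal N(0,1)$, so $\mathbb P(\mathcal H_K)=\overline\Phi(s^\star)=\overline\Phi\bigl(\tfrac{1}{\delta}\ln\lambda_K+\tfrac{\delta}{2}\bigr)$.

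Under $\mathbb Q$ we have $S\sim\mathcal N(\delta,1)$, so $\mathbb Q(\mathcal H_K)=\overline\Phi(s^\star-\delta)=\overline\Phi\bigl(\tfrac{1}{\delta}\ln\lambda_K-\tfrac{\delta}{2}\bigr)$.

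Substituting these into Step 2 gives the claimed formula.

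\textbf{Edge cases.} The main care is needed here rather than in the computation.

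If $\lambda_K=0$, which happens for $K=0$ or whenever every $G_k$ added so far is zero, I would interpret the formula by its limit $\ln\lambda_K\to-\infty$. This gives $\overline\Phi(-\infty)-0=1$, consistent with $\mathcal H=\Omega$.

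If $\delta=0$, then $\rho\equiv1$ and $G_2=0$, so the formula should be read as a limit. I would note that the statement is meant for $\delta>0$.

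A sanity check I would include: for $K=1$, $\lambda_1=G_1=1$, so $\ln\lambda_1=0$ and $G_2=\overline\Phi(-\delta/2)-\overline\Phi(\delta/2)$. The acceptance probability is then $1-G_2=2\overline\Phi(\delta/2)$, matching~\eqref{eq::single_proposal_acceptance}.

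\textbf{Main obstacle.} The hardest point is justifying that the residual mass after $K$ rejections is exactly $\int(\rho-\lambda_K)_+\diff\mathbb P$, with the levels updated as $\lambda_k=\lambda_{k-1}+G_k$. This is imported from~\citep{flamich2023adaptive} through~\eqref{eq::target_remain}. Should a self-contained argument be required, I would prove it by induction on $k$.

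The inductive step would compute the sub-probability accepted at round $k$. Its density against $\mathbb P$ is
\[
\Delta_{k-1}(\Omega)\cdot\min\!\left\{1,\frac{(\rho-\lambda_{k-1})_+}{G_k}\right\}\Big/\ \Delta_{k-1}(\Omega)\cdot(\text{normalisation}),
\]
which should simplify to $\min\{(\rho-\lambda_{k-1})_+,\,G_k\}$. Subtracting this from $(\rho-\lambda_{k-1})_+$ leaves $(\rho-\lambda_k)_+$. This subtraction identity, $(x)_+-\min\{(x)_+,G\}=(x-G)_+$, is the crux of the induction.
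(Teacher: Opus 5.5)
Your proposal is correct and follows essentially the same route as the paper. You obtain $1-G_{K+1}$ from $G_{K+1}=\Pr(J>K)$ via the imported GRS residual-mass lemma, then evaluate $G_{K+1}=\mathbb Q(\mathcal H_K)-\lambda_K\mathbb P(\mathcal H_K)$ by observing that $\mathcal H_K$ is a half-space in the projected scalar coordinate. The differences are cosmetic. You centre that scalar at $\mu_p$, so the laws are $\mathcal N(0,1)$ and $\mathcal N(\delta,1)$ with threshold $\tfrac{1}{\delta}\ln\lambda_K+\tfrac{\delta}{2}$, whereas the paper centres it at $\bar\mu$, giving laws $\mathcal N(\mp\tfrac{\delta}{2},1)$ and threshold $\tfrac{1}{\delta}\ln\lambda_K$. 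Your induction sketch only spells out what the paper cites from \citet{flamich2023adaptive}.
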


\begin{proof}
By Lemma~III.1 of \citet{flamich2023adaptive}, $G_j$ is the probability that
greedy rejection sampling reaches proposal $j$, i.e.\ $G_j=\Pr(J\ge j)$.
Consequently the probability that the first $j$ proposals are all rejected is
\begin{equation*}
  \Pr(J>j)=\Pr(J\ge j+1)=G_{j+1},
\end{equation*}
so $\Pr(J\le j)=1-G_{j+1}$, and therefore
\begin{align*}
  \Pr\big(\exists k\in\{1,\dots,K\}:\,Y=\widehat Y_k\big)
  &=\Pr(J\le K)\\
  &=1-G_{K+1}.
  %\label{eq::step_one}
\end{align*}
The remainder of the proof computes $G_{K+1}$. Combining \eqref{eq::target_remain} and \eqref{eq::survival}, we obtain
\begin{align}
G_{j+1}
= \Delta_j(\Omega)
= \int_\Omega \left(\rho(y)-\lambda_j\right)_+\,\mathbb P(\diff y).
\label{eq::residual_mass}
\end{align}
Since the integrand
in~\eqref{eq::residual_mass} vanishes outside the superlevel set
\begin{equation*}
\mathcal H_j := \left\{y\in\Omega:\ \rho(y)\geq \lambda_j \right\},
\end{equation*}
we have
\begin{equation}
G_{j+1} = \int_{\mathcal H_j}\left(\rho(y) \lambda_j\right)\mathbb P(\diff y) = \mathbb Q(\mathcal H_j)-\lambda_j\,\mathbb P(\mathcal H_j). \label{eq::grs_two_terms}
\end{equation}
Computing $G_{j+1}$ has thus been reduced to computing the $\mathbb P$- and
$\mathbb Q$-masses of a superlevel set of the density ratio. For Gaussian $\mathbb P$ and $\mathbb Q$ with a common covariance $\sigma^2 I_d$, the log target-to-proposal ratio becomes:
%both are univariate Gaussian tail probabilities, as we now show. 
\begin{align*}
  \ln\rho(y)
  &=\frac{\|y-\mu_p\|^2-\|y-\mu_q\|^2}{2\sigma^2}\\
  &=\frac{2(\mu_q-\mu_p)^\top y-\big(\|\mu_q\|^2-\|\mu_p\|^2\big)}{2\sigma^2}\\
  &=\frac{(\mu_q-\mu_p)^\top\big(y-\bar\mu\big)}{\sigma^2}
  =\frac{\delta\,e^\top(y-\bar\mu)}{\sigma},
  % =\delta\,s(y),
\end{align*}
with $\bar\mu:=\tfrac12(\mu_p+\mu_q)$.
Let
\begin{equation*}
s(y):= \frac{1}{\sigma}\,e^\top\!\left(y-\bar\mu \right),
\end{equation*}
then the likelihood ratio becomes a monotone function of the scalar statistic $s$,
\begin{equation*}
\rho(y) = \exp\left(\delta\, s(y)\right).
\end{equation*}
% Moreover, since $s$ is an affine functional of a Gaussian vector with
% $\|e\|=1$, so if $X\sim \mathbb P$ and $Y\sim \mathbb Q$, then

Because $\delta>0$, the map $t\mapsto e^{\delta t}$ is strictly increasing, and the superlevel set is the half-space
\begin{align*}
\mathcal H_j
=
\left\{y:\ \rho(y)\geq \lambda_j\right\}
=
\left\{y:\ s(y)\geq \tau_j\right\},
\quad
\tau_j=\frac{\ln \lambda_j}{\delta}.
\end{align*}
Now, to characterize the masses $\mathbb Q(\mathcal{H}_j)$ and $\mathbb P(\mathcal{H}_j)$, we are left with the characterization of the the laws of $s$ when $X\sim \mathbb P$ and $Y\sim \mathbb Q$. $s$ is an
affine functional of a Gaussian vector, hence Gaussian with means
\begin{align*}
  \mathbb E[s(X)]&=\frac{e^\top(\mu_p-\bar\mu)}{\sigma}
  =\frac{e^\top(\mu_p-\mu_q)}{2\sigma}
  =-\frac{\delta}{2},\\
  \mathbb E[s(Y)]&=\frac{e^\top(\mu_q-\bar\mu)}{\sigma}
  =\frac{e^\top(\mu_q-\mu_p)}{2\sigma}
  =+\frac{\delta}{2},
\end{align*}
and same unit variance
\begin{align*}
    \operatorname{Var}[s(X)]=\operatorname{Var}[s(Y)]=\frac{e^\top(\sigma^2 I_d)e}{\sigma^2}=\|e\|^2=1.
\end{align*}
Therefore,
\begin{equation*}
s(X)\sim\mathcal N\left(-\frac{\delta}{2},1\right), \qquad
s(Y) \sim \mathcal N\left( \frac{\delta}{2},1 \right).
\end{equation*}
Standardising the two laws of $s$ then gives
\begin{align*}
\mathbb Q(\mathcal H_j)
&=
\overline{\Phi}\left(\tau_j-\frac{\delta}{2}\right),
\qquad
\mathbb P(\mathcal H_j)= \overline{\Phi}\left(\tau_j+\frac{\delta}{2} \right),
\end{align*}
which, substituted into~\eqref{eq::grs_two_terms},
becomes
\begin{align*}
G_{j+1} &= \mathbb Q(\mathcal H_j)-\lambda_j\,\mathbb P(\mathcal H_j)\\
&=\overline{\Phi}\left(\tau_j-\frac{\delta}{2}\right) - \lambda_j \overline{\Phi}\left(\tau_j+\frac{\delta}{2} \right).
\end{align*}
Taking $j=K$ completes the proof.
\end{proof}

\begin{remark}[Recovering RMC]
In the special case of $K=1$ we recover RMC. In fact,
$\lambda_1=G_1=1$ yields $\tau_1=0$, and hence
\begin{align*}
\Pr(J\leq 1)
&= 1-G_2\\ &= 1- \overline{\Phi}\left( -\frac{\delta}{2} \right) + \overline{\Phi}\left( \frac{\delta}{2} \right)\\ &= 2\overline{\Phi}\left( \frac{\delta}{2} \right).
\end{align*}
This is the same
acceptance probability as \cite{de2025accelerated}.
\end{remark}

% \begin{figure}[t]
%   \centering
%   \includegraphics[width=0.65\linewidth]{Figures/curves/accept_prob.pdf}
%   \caption{Per-step acceptance probability of D-GRS for different number $K$ of draft proposals. }
%   \label{fig:accept_prob}
% \end{figure}
\begin{figure}[t]
\centering
%% ---------------------------------------------------------------------
%% Figure 6 -- per-step (node) acceptance probability of D-GRS.
%%
%% Curves are 1 - G_{K+1} from Theorem 2, evaluated with the recursion
%%   lambda_0 = 0,  G_1 = 1
%%   lambda_k = lambda_{k-1} + G_k
%%   tau_k    = ln(lambda_k) / delta
%%   G_{k+1}  = Phibar(tau_k - delta/2) - lambda_k * Phibar(tau_k + delta/2)
%% Sanity checks: K=1 reproduces 2*Phibar(delta/2) of eq. (20) to 6 d.p.,
%% and every curve tends to 1 as delta -> 0, per Remark 2.
%%
%% Requires in the preamble:
%%   \usepackage{pgfplots}
%%   \pgfplotsset{compat=1.18}
%% ---------------------------------------------------------------------
\begin{tikzpicture}
\begin{axis}[
  width=0.78\linewidth, height=0.42\linewidth,
  xlabel={Normalized Mean Mismatch $\delta=\|\mu_q-\mu_p\|_2/\sigma$},
  ylabel={Per-step Acceptance Probability},
  title={D-GRS Per-step Acceptance Probability},
  every axis label/.append style={font=\small\sffamily},
  tick label style={font=\footnotesize\sffamily},
  legend style={font=\footnotesize\sffamily, draw=none, fill=none,
                at={(0.97,0.97)}, anchor=north east},
  legend cell align=left,
  xmin=0, xmax=4.5, ymin=0, ymax=1.02,
  xtick={0,0.5,...,4.5}, ytick={0,0.2,...,1.0},
  grid=major, grid style={gray!20},
  axis lines=left, clip=false,
]

\addplot[grsone, very thick] coordinates {
   (0.00,1.00000) (0.05,0.98005) (0.10,0.96012) (0.15,0.94021) (0.20,0.92034)
   (0.25,0.90052) (0.30,0.88076) (0.35,0.86108) (0.40,0.84148) (0.45,0.82198)
   (0.50,0.80259) (0.55,0.78332) (0.60,0.76418) (0.65,0.74518) (0.70,0.72634)
   (0.75,0.70766) (0.80,0.68916) (0.85,0.67084) (0.90,0.65271) (0.95,0.63479)
   (1.00,0.61708) (1.05,0.59958) (1.10,0.58232) (1.15,0.56529) (1.20,0.54851)
   (1.25,0.53197) (1.30,0.51569) (1.35,0.49968) (1.40,0.48393) (1.45,0.46845)
   (1.50,0.45325) (1.55,0.43834) (1.60,0.42371) (1.65,0.40937) (1.70,0.39533)
   (1.75,0.38157) (1.80,0.36812) (1.85,0.35497) (1.90,0.34211) (1.95,0.32956)
   (2.00,0.31731) (2.05,0.30536) (2.10,0.29372) (2.15,0.28237) (2.20,0.27133)
   (2.25,0.26059) (2.30,0.25014) (2.35,0.23999) (2.40,0.23014) (2.45,0.22058)
   (2.50,0.21130) (2.55,0.20231) (2.60,0.19360) (2.65,0.18517) (2.70,0.17702)
   (2.75,0.16913) (2.80,0.16151) (2.85,0.15416) (2.90,0.14706) (2.95,0.14021)
   (3.00,0.13361) (3.05,0.12726) (3.10,0.12114) (3.15,0.11526) (3.20,0.10960)
   (3.25,0.10416) (3.30,0.09894) (3.35,0.09393) (3.40,0.08913) (3.45,0.08453)
   (3.50,0.08012) (3.55,0.07590) (3.60,0.07186) (3.65,0.06800) (3.70,0.06431)
   (3.75,0.06079) (3.80,0.05743) (3.85,0.05423) (3.90,0.05118) (3.95,0.04827)
   (4.00,0.04550) (4.05,0.04287) (4.10,0.04036) (4.15,0.03799) (4.20,0.03573)
   (4.25,0.03359) (4.30,0.03156) (4.35,0.02963) (4.40,0.02781) (4.45,0.02608)
   (4.50,0.02445)
};
\addlegendentry{$K=1$}

\addplot[grstwo, very thick] coordinates {
   (0.00,1.00000) (0.05,0.98828) (0.10,0.97620) (0.15,0.96377) (0.20,0.95101)
   (0.25,0.93794) (0.30,0.92455) (0.35,0.91087) (0.40,0.89691) (0.45,0.88268)
   (0.50,0.86821) (0.55,0.85351) (0.60,0.83859) (0.65,0.82347) (0.70,0.80818)
   (0.75,0.79272) (0.80,0.77711) (0.85,0.76138) (0.90,0.74554) (0.95,0.72961)
   (1.00,0.71360) (1.05,0.69755) (1.10,0.68145) (1.15,0.66534) (1.20,0.64923)
   (1.25,0.63313) (1.30,0.61707) (1.35,0.60105) (1.40,0.58511) (1.45,0.56925)
   (1.50,0.55349) (1.55,0.53784) (1.60,0.52232) (1.65,0.50694) (1.70,0.49172)
   (1.75,0.47667) (1.80,0.46180) (1.85,0.44713) (1.90,0.43266) (1.95,0.41840)
   (2.00,0.40437) (2.05,0.39057) (2.10,0.37702) (2.15,0.36371) (2.20,0.35066)
   (2.25,0.33788) (2.30,0.32536) (2.35,0.31311) (2.40,0.30115) (2.45,0.28946)
   (2.50,0.27806) (2.55,0.26695) (2.60,0.25613) (2.65,0.24559) (2.70,0.23535)
   (2.75,0.22539) (2.80,0.21573) (2.85,0.20636) (2.90,0.19727) (2.95,0.18848)
   (3.00,0.17996) (3.05,0.17173) (3.10,0.16377) (3.15,0.15609) (3.20,0.14868)
   (3.25,0.14154) (3.30,0.13465) (3.35,0.12803) (3.40,0.12166) (3.45,0.11554)
   (3.50,0.10966) (3.55,0.10402) (3.60,0.09860) (3.65,0.09342) (3.70,0.08845)
   (3.75,0.08370) (3.80,0.07916) (3.85,0.07482) (3.90,0.07067) (3.95,0.06672)
   (4.00,0.06295) (4.05,0.05936) (4.10,0.05593) (4.15,0.05268) (4.20,0.04959)
   (4.25,0.04665) (4.30,0.04385) (4.35,0.04120) (4.40,0.03869) (4.45,0.03631)
   (4.50,0.03406)
};
\addlegendentry{$K=2$}

\addplot[grsfour, very thick] coordinates {
   (0.00,1.00000) (0.05,0.99368) (0.10,0.98695) (0.15,0.97981) (0.20,0.97225)
   (0.25,0.96427) (0.30,0.95588) (0.35,0.94707) (0.40,0.93785) (0.45,0.92822)
   (0.50,0.91819) (0.55,0.90776) (0.60,0.89695) (0.65,0.88576) (0.70,0.87420)
   (0.75,0.86229) (0.80,0.85002) (0.85,0.83743) (0.90,0.82452) (0.95,0.81131)
   (1.00,0.79781) (1.05,0.78404) (1.10,0.77002) (1.15,0.75577) (1.20,0.74130)
   (1.25,0.72663) (1.30,0.71179) (1.35,0.69679) (1.40,0.68166) (1.45,0.66641)
   (1.50,0.65106) (1.55,0.63564) (1.60,0.62016) (1.65,0.60466) (1.70,0.58913)
   (1.75,0.57362) (1.80,0.55813) (1.85,0.54269) (1.90,0.52731) (1.95,0.51201)
   (2.00,0.49682) (2.05,0.48175) (2.10,0.46681) (2.15,0.45202) (2.20,0.43739)
   (2.25,0.42295) (2.30,0.40871) (2.35,0.39467) (2.40,0.38085) (2.45,0.36726)
   (2.50,0.35391) (2.55,0.34081) (2.60,0.32797) (2.65,0.31540) (2.70,0.30311)
   (2.75,0.29109) (2.80,0.27936) (2.85,0.26792) (2.90,0.25678) (2.95,0.24593)
   (3.00,0.23538) (3.05,0.22513) (3.10,0.21519) (3.15,0.20554) (3.20,0.19620)
   (3.25,0.18716) (3.30,0.17841) (3.35,0.16996) (3.40,0.16181) (3.45,0.15394)
   (3.50,0.14636) (3.55,0.13907) (3.60,0.13205) (3.65,0.12530) (3.70,0.11882)
   (3.75,0.11260) (3.80,0.10664) (3.85,0.10093) (3.90,0.09547) (3.95,0.09024)
   (4.00,0.08525) (4.05,0.08048) (4.10,0.07593) (4.15,0.07159) (4.20,0.06745)
   (4.25,0.06352) (4.30,0.05978) (4.35,0.05622) (4.40,0.05284) (4.45,0.04964)
   (4.50,0.04660)
};
\addlegendentry{$K=4$}

\addplot[grseight, very thick] coordinates {
   (0.00,1.00000) (0.05,0.99681) (0.10,0.99330) (0.15,0.98945) (0.20,0.98525)
   (0.25,0.98069) (0.30,0.97576) (0.35,0.97044) (0.40,0.96474) (0.45,0.95864)
   (0.50,0.95214) (0.55,0.94523) (0.60,0.93791) (0.65,0.93017) (0.70,0.92201)
   (0.75,0.91343) (0.80,0.90443) (0.85,0.89502) (0.90,0.88520) (0.95,0.87497)
   (1.00,0.86434) (1.05,0.85331) (1.10,0.84191) (1.15,0.83013) (1.20,0.81799)
   (1.25,0.80550) (1.30,0.79268) (1.35,0.77954) (1.40,0.76610) (1.45,0.75238)
   (1.50,0.73839) (1.55,0.72415) (1.60,0.70969) (1.65,0.69502) (1.70,0.68016)
   (1.75,0.66514) (1.80,0.64998) (1.85,0.63471) (1.90,0.61933) (1.95,0.60388)
   (2.00,0.58838) (2.05,0.57285) (2.10,0.55731) (2.15,0.54179) (2.20,0.52631)
   (2.25,0.51088) (2.30,0.49553) (2.35,0.48028) (2.40,0.46515) (2.45,0.45016)
   (2.50,0.43532) (2.55,0.42065) (2.60,0.40617) (2.65,0.39189) (2.70,0.37783)
   (2.75,0.36400) (2.80,0.35041) (2.85,0.33708) (2.90,0.32401) (2.95,0.31122)
   (3.00,0.29871) (3.05,0.28649) (3.10,0.27457) (3.15,0.26295) (3.20,0.25164)
   (3.25,0.24063) (3.30,0.22995) (3.35,0.21957) (3.40,0.20952) (3.45,0.19978)
   (3.50,0.19035) (3.55,0.18124) (3.60,0.17245) (3.65,0.16396) (3.70,0.15578)
   (3.75,0.14791) (3.80,0.14034) (3.85,0.13306) (3.90,0.12607) (3.95,0.11937)
   (4.00,0.11294) (4.05,0.10679) (4.10,0.10090) (4.15,0.09528) (4.20,0.08990)
   (4.25,0.08478) (4.30,0.07989) (4.35,0.07523) (4.40,0.07080) (4.45,0.06659)
   (4.50,0.06258)
};
\addlegendentry{$K=8$}
\end{axis}
\end{tikzpicture}
\caption{Per-step acceptance probability of D-GRS for different number $K$ of draft proposals.}
\label{fig:accept_prob}
\end{figure}

\begin{remark}[Degenerate and anisotropic cases]
\label{rem:degenerate}
Theorem~\ref{thm:acceptance_probability} assumes $\delta>0$, which is needed for
$\tau_j=\ln\lambda_j/\delta$ to be defined. If $\delta=0$ then
$\mathbb P=\mathbb Q$, $\rho\equiv1$, and the first proposal is accepted almost
surely, so $G_2=0$ and the acceptance probability equals $1$ for every
$K\ge1$; this is also the limit of
\eqref{eq::grs_acceptance_probability} as $\delta\downarrow0$. The argument
extends verbatim to any common covariance $\Sigma\succ0$ upon replacing $\delta$
by the Mahalanobis distance
$\delta=\|\Sigma^{-1/2}(\mu_q-\mu_p)\|$ and $s$ by
$s(y)=e^\top\Sigma^{-1/2}(y-\bar\mu)$ with
$e=\Sigma^{-1/2}(\mu_q-\mu_p)/\delta$.
\end{remark}

\section{Extended Experimental Results}
In this section, we provide more details about our experimental settings.

% \subsection{Impact of the Branching Factor $K$}
% \label{sec:branching-factor}
\subsection{Implementation Details}
\paragraph{The delayed-drift proposal.}
In our experiments, we instantiate the template with~\eqref{eq:delayed_drift}, taking the target drift evaluated at the root: $\tilde{m}^p_{n'}(y)=y+\gamma\, b^q_{t_n}(Y_n)$ for every step $n'\leq L$ of the
round. The proposal mean is therefore the same map at every depth, so the drafting phase in
Algorithm~\ref{alg:template} specializes to
\begin{equation*}
  \widehat Y_v \sim \mathcal N\big(\tilde{m}^p_{n}(\widehat Y_u),\,
  \sigma^2_{n+\ell-1}I_d\big),
  \  v\in\ch(u),\ |u|=\ell-1,
  \label{eq::frozen_drift}
\end{equation*}
with $\tilde{m}^p_{n}(\widehat Y_u)=\widehat Y_u + \gamma b^q_{t_n}(Y_n)$, $Y_n=Y_\rt$. 
Note that this would require 1 extra target call per round (other than the one needed for verification) to compute the target drift at the root node $b^q_{t_n}(Y_n)$. 

\paragraph{Root-drift Prefetching.}
When the delayed-drift is used as proposal, the drafting phase requires evaluating $b^q_{t_n}(Y_n)$ at the root of every round, that is, one target evaluation in addition to the one used for verification. Following \citet{de2025accelerated}, we avoid this by reusing a drift already computed during the preceding round, a device we refer to as \emph{root-drift prefetching}: $b^q_{t_n}(Y_n)$ is replaced by $b^q_{t_{n'}}(\tilde Y)$ for some pair $(n', \tilde Y)$ whose drift is already available.

\subsection{The Value of $K$ and $L$}
\label{sec:branching-factor}
\begin{figure}[t]
    \centering
    \includegraphics[width=0.8\linewidth]{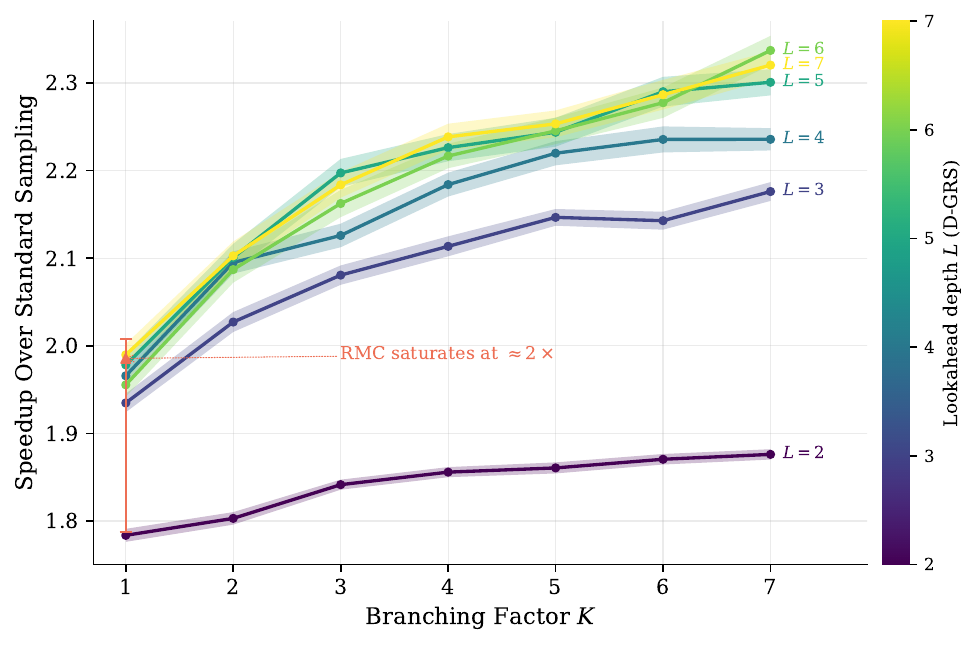}
    \caption{NFE speedup over standard sampling on the Gaussian-mixture target as a function of tree width $K$. Points indicate the mean per-replicate speedup, averaged over 100 independent trials. Coloured lines depict D-GRS across lookahead depths $L\in\{2,\dots,7\}$ (see color bar), with shaded bands showing $\pm 1$ standard error.}

    \label{fig:gm_speedup_vs_k}
\end{figure}

We next isolate the two mechanisms by which a proposal budget may be allocated:
the branching factor $K$, i.e.\ the number of candidate children expanded per
node, and the lookahead depth $L$. Figure~\ref{fig:gm_speedup_vs_k} reports the
speedup over standard sampling as a function of $K$, stratified by $L$, with
each point averaged over $100$ independent replicates.

RMC admits no branching, as it drafts a single linear chain, and consequently
saturates at $\approx 2\times$ irrespective of the chain length. This ceiling admits
a closed-form characterization. Under the assumption that the per-step acceptance probability of the coupling $\alpha$ is fixed for each step, the expected length of the accepted prefix of a
chain of $n$ drafted states is
\begin{equation*}
    \sum_{i=1}^{n} \alpha^{i}
    \;=\; \frac{\alpha\left(1-\alpha^{n}\right)}{1-\alpha}
    \;\xrightarrow[n\to\infty]{}\; \frac{\alpha}{1-\alpha},
\end{equation*}
so that, since each round commits the accepted prefix together with one
resampled state at the cost of a single target evaluation, the number of
committed steps per target network function evaluation is bounded above by
$1/(1-\alpha)$. The measured acceptance probability $\alpha \approx 0.52$ yields
a predicted ceiling of $2.09\times$, in close agreement with the observed
$2\times$. Extending the chain beyond this regime therefore incurs additional
draft computation without any attendant reduction in target evaluations: the
geometric decay of the acceptance probability along the chain, rather than the
draft budget, constitutes the binding constraint.

%Branching relaxes precisely this constraint. 
The relative value of depth $L$ and width $K$ is governed by the per-step acceptance probability of the coupling, which is controlled by the ratio $\delta$ defined in \eqref{eq:r1proj-variables}. 
% The dimensionality $d$, the step size $\gamma$ and the
% stochasticity level $\varepsilon$ all widen this ratio, lowering the per-step
% acceptance probability and thereby increasing the marginal value of each
% additional child.
Consequently, $K,L$ are complementary rather than interchangeable,
and their relative priority is separated by a problem-dependent crossover. Below
it, depth is the binding constraint: a shallow tree exhausts its lookahead before
branching can be exploited, and no $K$ recovers the deficit. At
$L=2$ the best configuration attains only $1.876\times$, short of the RMC ceiling,
whereas at $K=1$ the tree degenerates to a chain and D-GRS reproduces that
ceiling. D-GRS surpasses RMC only once the lookahead is deep enough to sustain an accepted run, here from $(K,L)=(2,3)$ onward. Above the crossover, the two parameters compete for the same budget: the marginal benefit of depth is exhausted by $L\approx5$, beyond which the $L\in\{5,6,7\}$ curves lie within one standard error of one another, while widening from $K=5$ to $K=7$ at $L=6$ still yields $0.09\times$, up to a best of $2.34\times$ at $(K,L)=(7,6)$. Once the lookahead is sufficient, additional budget is better spent on width than on depth.

\subsection{SD3 Conversion}\label{sec:sd3_conversion}
Unlike DDPMs, SD3 is trained as a rectified flow (flow matching)
model, i.e. it directly parameterizes a probability-flow ODE. We recall here how this ODE is recast as the SDE in \eqref{eq:sde}, which requires identifying the drift $f_t$, the diffusion
coefficient $g_t$ and the score $s_t$ implied by the SD3 interpolant.

Unlike DDPMs, SD3 is trained as a rectified flow model: it parameterises a
probability-flow ODE, whose transitions are deterministic point masses, yielding a total variation distance of one. Speculation is therefore vacuous for the native
SD3 sampler, and applying our framework requires transitions that are genuinely
Gaussian.

We obtain them without retraining, by exhibiting SD3 as a member of the family of
reverse-time SDEs~\eqref{eq:sde} that share its marginals. Concretely, the
conversion identifies the drift $f_t$, the diffusion coefficient $g_t$ and the
score $s_t$ implied by the SD3 interpolant, so that the reverse drift~\eqref{eq:sde} and its
Euler--Maruyama discretisation~\eqref{eq:em_discretisation}--\eqref{eq::target_mean} apply verbatim. The outcome, recorded
in~\eqref{eq:sd3-mean}--\eqref{eq:sd3-scale} below, is a Gaussian transition
kernel whose variance is independent of the state, which is precisely the
structure the rank-$1$ reduction~\eqref{eq:r1proj-variables}--\eqref{eq:final_proj} requires.

\paragraph{Interpolant and velocity field.}
Let $X_0\sim p_{\rm data}$ and $\xi\sim\mathcal N(0,\mathbb I_d)$ be independent.
For $t\in[0,1]$, SD3 perturbs data along the linear interpolant
\begin{align}
  X_t=\alpha_t X_0+\beta_t\,\xi,
  \quad
  \alpha_t=1-t,
  \quad
  \beta_t=t,
  \label{eq:sd3-interpolant}
\end{align}
so that $X_0\sim p_{\rm data}$ and $X_1\sim\mathcal N(0,\mathbb I_d)$. Along
\eqref{eq:sd3-interpolant} the velocity is constant,
$\tfrac{\diff X_t}{\diff t}=\xi-X_0$, and the network $v_\theta$ is trained by
conditional flow matching to regress it from the corrupted state alone. Since the
pair $(X_0,\xi)$ is not identifiable from $X_t$, the minimiser of the training
objective is the conditional expectation, that is the \emph{marginal} velocity field
\begin{align}
  v_t(x)
  &:=\mathbb E\!\left[\frac{\diff X_t}{\diff t}\,\bigg|\,X_t=x\right]\\
  &=\mathbb E[\xi\mid X_t=x]-\mathbb E[X_0\mid X_t=x].
  \label{eq:marginal-velocity}
\end{align}
% The field $v_t$ generates the marginals $p_t:=\mathrm{Law}(X_t)$ through the probability-flow ODE
% $\diff X_t=v_t(X_t)\,\diff t$, which is SD3's native sampler.
 
\paragraph{Drift and diffusion coefficient.}
Any affine interpolant of the form~\eqref{eq:sd3-interpolant} is the marginal law
of a linear SDE 
$$\diff X_t=f_tX_t\,\diff t+g_t\,\diff B_t,$$ since for such an SDE
the conditional mean and variance of $X_t\mid X_0$ obey
$\tfrac{\diff\alpha_t}{\diff t}=f_t\alpha_t$ and
$\tfrac{\diff\beta_t^2}{\diff t}=2f_t\beta_t^2+g_t^2$. Inverting these relations,
\begin{align}
  f_t=\frac{1}{\alpha_t}\frac{\diff\alpha_t}{\diff t},
  \qquad
  g_t^2=2\beta_t\!\left(\frac{\diff\beta_t}{\diff t}-f_t\beta_t\right),
  \label{eq:fg-from-interpolant}
\end{align}
and substituting $\alpha_t=1-t$, $\beta_t=t$ yields the SD3 coefficients
\begin{align}
  f_t(x)=-\frac{x}{1-t},
  \qquad
  g_t^2=\frac{2t}{1-t}.
  \label{eq:sd3-fg}
\end{align}
 
\paragraph{Score.}
The reverse drift~\eqref{eq:reverse_drift} is expressed through the Stein score, which the flow
matching parameterization does not supply directly. It is, however, an affine
function of the velocity. Taking
conditional expectations in~\eqref{eq:sd3-interpolant} gives
$$x=(1-t)\,\mathbb E[X_0\mid X_t=x]+t\,\mathbb E[\xi\mid X_t=x].$$
Eliminating $\mathbb E[X_0\mid X_t=x]$ from~\eqref{eq:marginal-velocity} then
leaves
\begin{align}
  v_t(x)=\frac{\mathbb E[\xi\mid X_t=x]-x}{1-t}.
  \label{eq:v-from-xi}
\end{align}
Combining \eqref{eq:v-from-xi} with $s_t(x)=-\mathbb E[\xi\mid X_t=x]/\beta_t$ \cite{karras2022elucidating}, we have
\begin{align}
  s_t(x)=-\frac{x+(1-t)\,v_t(x)}{t}.
  \label{eq:score-from-velocity}
\end{align}
% A single evaluation of $v_\theta$ therefore supplies both the velocity and the
% score.
 
\paragraph{Reverse process.}
Substituting $f_{1-t}(x)=-x/t$, $g^2_{1-t}=2(1-t)/t$
and~\eqref{eq:score-from-velocity}, all evaluated at time $1-t$, into the reverse
drift~\eqref{eq:reverse_drift} gives
\begin{align}
  b^q_t(x)=-\big(1+\varepsilon^2\big)v_{1-t}(x)-\varepsilon^2\,\frac{x}{t},
  \label{eq:sd3-reverse-drift}
\end{align}
and the reverse process~\eqref{eq:sde} is
$$\diff Y_t=b^q_t(Y_t)\,\diff t+\varepsilon\,g_{1-t}\,\diff W_t,$$
with $g_{1-t}=\sqrt{2(1-t)/t}$.

%so that the reverse process~\eqref{eq:sde} reads $\diff Y_t=b^q_t(Y_t)\,\diff t+\varepsilon\,g_{1-t}\,\diff W_t$ with $g_{1-t}=\sqrt{2(1-t)/t}$. Two limits confirm the identification: 
% At $\varepsilon=0$ expression~\eqref{eq:sd3-reverse-drift} reduces to $-v_{1-t}(x)$,
% the probability-flow ODE integrated backwards and hence SD3's native sampler,
% while at $\varepsilon=1$ it gives $-2v_{1-t}(x)-x/t$, the standard reverse-time
% SDE. 
% Every member of the family leaves the marginals $(p_t)_{t\in[0,1]}$
% invariant, so the churn level may be chosen freely and no retraining is required.
 
\paragraph{Transition kernels.}
Applying the Euler--Maruyama discretisation~(4) on the uniform grid
$t_n=n\gamma$ yields, conditionally on $Y_n=y$, the Gaussian transition
$\mathbb Q_n(\cdot\mid y)=\mathcal N\big(m^q_n(y),\sigma_n^2\mathbb I_d\big)$
of~\eqref{eq::target_mean}, with
\begin{align}
  m^q_n(y)
  &=y-\gamma\left[\big(1+\varepsilon^2\big)v_{1-t_n}(y)
  +\varepsilon^2\,\frac{y}{t_n}\right],
  \label{eq:sd3-mean}\\
  \sigma_n
  &=\sqrt{\gamma}\,\varepsilon\,g_{1-t_n}
  =\varepsilon\sqrt{\frac{2\gamma\,(1-t_n)}{t_n}}.
  \label{eq:sd3-scale}
\end{align}
The scale~\eqref{eq:sd3-scale} is a function of the time grid and the churn level
alone, and in particular does not depend on the state. A draft kernel obtained by
substituting a delayed velocity into~\eqref{eq:sd3-mean} consequently shares the
variance of the target kernel and differs from it only in its mean, which is the
structure required by~\eqref{eq:r1proj-variables}--\eqref{eq:final_proj}. Both RMC and D-GRS therefore apply to SD3 without modification.
 
\begin{remark}[The role of churn]
\label{rem:churn}
The churn level cannot be taken to zero. At $\varepsilon=0$ the
scale~\eqref{eq:sd3-scale} vanishes, both transitions degenerate to point masses,
and their total variation distance equals one, so every proposal is rejected and
speculation reduces to the standard sampler.

Its effect on acceptance is not
monotone: raising $\varepsilon$ increases the denominator of
$\delta=\|m^q_n-m^p_n\|/\sigma_n$, but it also increases the numerator, both through the factor $1+\varepsilon^2$ in~\eqref{eq:sd3-mean} and through the larger displacement of the drafted states from the point at which the delayed velocity was evaluated.
%The second effect dominates empirically, and acceptance degrades as $\varepsilon$ grows across the sweep $\varepsilon\in\{0.1,0.3,0.6\}$ reported in Section~5.2.
\end{remark}
 
\paragraph{Timestep shift.}
Since SD3 is trained at $1024\times1024$ resolution and its noise schedule is
resolution dependent. Following~\citet{esser2024scaling}, we adapt it to the $512\times512$ generation of our experimental setting by applying the shift
\begin{align}
  t\ \longmapsto\ \frac{\kappa\,t}{1+(\kappa-1)\,t},
  \qquad \kappa=3.0,
  \label{eq:timestep-shift}
\end{align}
to the uniform grid before evaluating $v_\theta$. The conversion above is
unaffected, since~\eqref{eq:score-from-velocity}
and~\eqref{eq:sd3-reverse-drift} hold pointwise in $t$.

\subsection{Qualitative Assessment of Correctness}
In order to qualitatively asses the correctness of the proposed algorithm D-GRS and the baseline implementation RMC \citep{de2025accelerated} in the context of image generation, we show 64 of the generated images for the experimental setting reported in Section~\ref{sec:experiments}. 
Figures~\ref{fig:cifar10-images} and ~\ref{fig:ffhq-images}
%~\ref{fig:sd3-images} 
show the generated images for CIFAR-10, FFHQ, 
%and COCO 2014, 
respectively. Table~\ref{tab:prompts} reports the prompts from COCO 2014 used for the conditional image generation using SD3, and Figure~\ref{fig:sd3-images} shows the generated images.

\newlength{\gridw}\setlength{\gridw}{0.30\linewidth}
\newcommand{\rowlabel}[1]{%
  \raisebox{\dimexpr 0.5\gridw-0.5\height\relax}{\rotatebox{90}{\small #1}}}

\newcommand{\vcenr}[1]{\raisebox{3.3\height}{#1}}
\newcommand{\vcend}[1]{\raisebox{2.3\height}{#1}}

\begin{figure}[t]
  \centering
  \setlength{\tabcolsep}{2pt}
  \renewcommand{\arraystretch}{1.0}
  \begin{tabular}{@{}c@{\hspace{4pt}}ccc@{}}
    & $\varepsilon = 0.1$ & $\varepsilon = 0.3$ & $\varepsilon = 0.6$ \\[2pt]
    \vcenr{\rotatebox{90}{\small RMC}} &
      \includegraphics[width=0.30\linewidth]{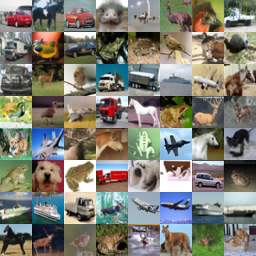} &
      \includegraphics[width=0.30\linewidth]{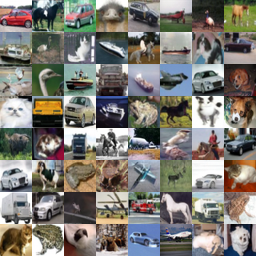} &
      \includegraphics[width=0.30\linewidth]{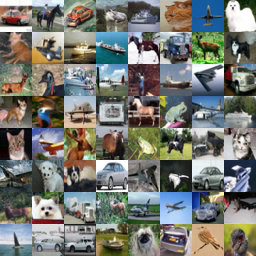} \\[2pt]
    \vcend{\rotatebox{90}{\small D-GRS}} &
      \includegraphics[width=0.30\linewidth]{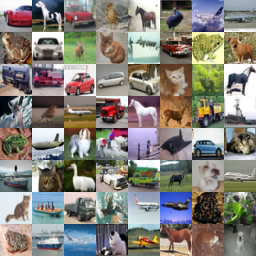} &
      \includegraphics[width=0.30\linewidth]{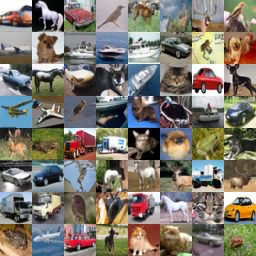} &
      \includegraphics[width=0.30\linewidth]{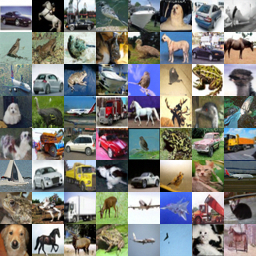} \\
  \end{tabular}
  \caption{64 CIFAR-10 samples (32$\times$32) selected at random generated at $K=4$, $L=4$, $T=100$, for increasing churn $\varepsilon$. Top row: RMC. Bottom row: D-GRS.}
  \label{fig:cifar10-images}
\end{figure}

\begin{figure}[t]
  \centering
  \setlength{\tabcolsep}{2pt}
  \renewcommand{\arraystretch}{1.0}
  \begin{tabular}{@{}c@{\hspace{4pt}}ccc@{}}
    & $\varepsilon = 0.1$ & $\varepsilon = 0.3$ & $\varepsilon = 0.6$ \\[2pt]
    \vcenr{\rotatebox{90}{\small RMC}} &
      \includegraphics[width=0.30\linewidth]{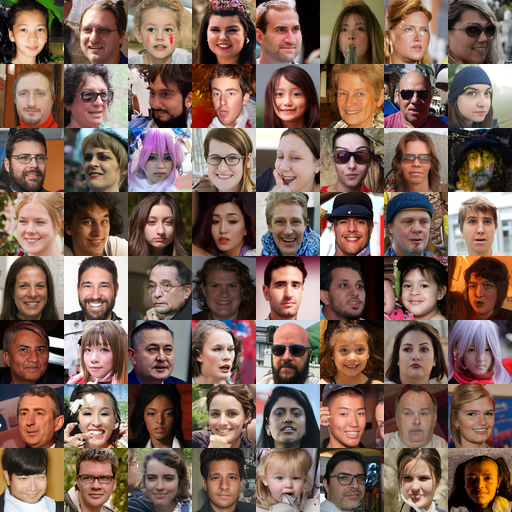} &
      \includegraphics[width=0.30\linewidth]{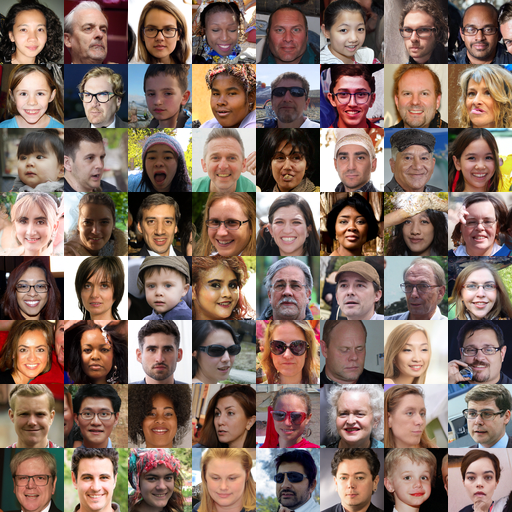} &
      \includegraphics[width=0.30\linewidth]{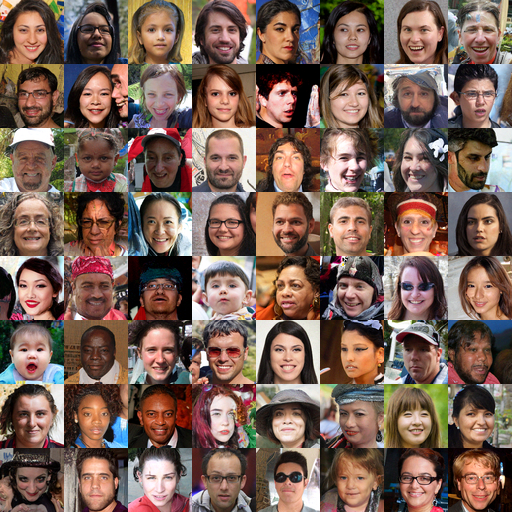} \\[2pt]
    \vcend{\rotatebox{90}{\small D-GRS}} &
      \includegraphics[width=0.30\linewidth]{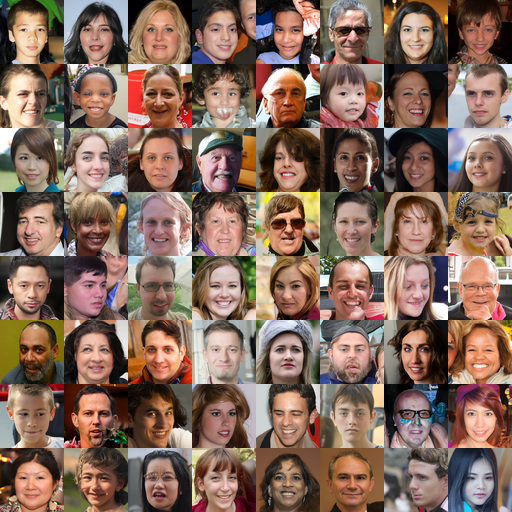} &
      \includegraphics[width=0.30\linewidth]{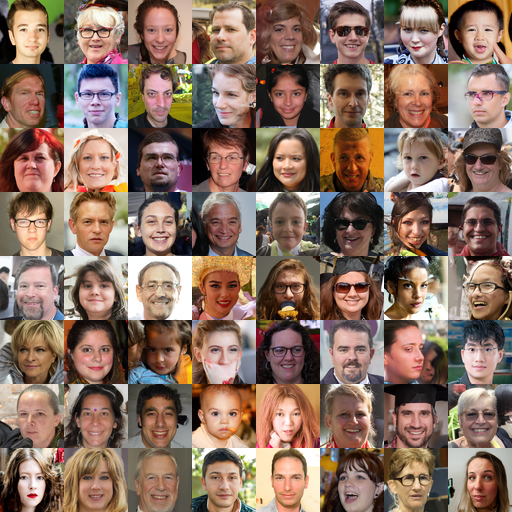} &
      \includegraphics[width=0.30\linewidth]{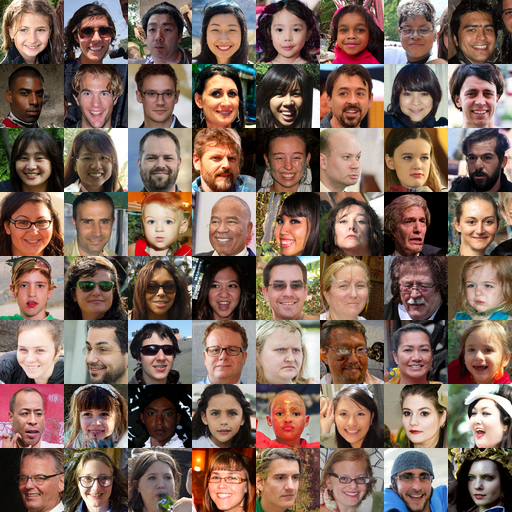} \\
  \end{tabular}
  \caption{First 64 FFHQ samples (64$\times$64) generated samples at $K=4$, $L=4$, $T=100$, for increasing churn $\varepsilon$. Top row: RMC. Bottom row: D-GRS.}
  \label{fig:ffhq-images}
\end{figure}

\begin{figure}
    \centering
    \begin{subfigure}{0.48\linewidth}
        \includegraphics[width=\linewidth]{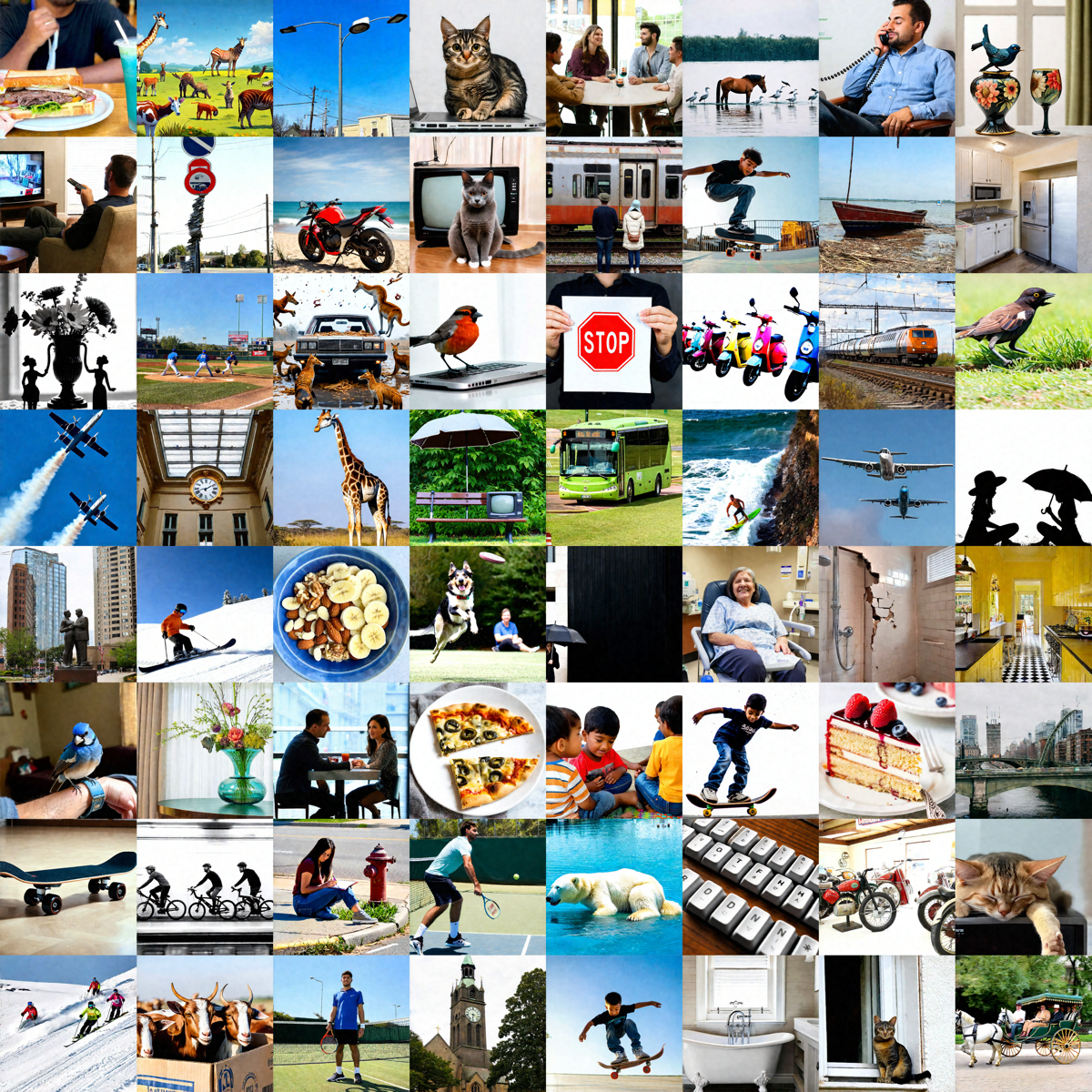}
        \caption{RMC}
        \label{fig:sub-a}
    \end{subfigure}
    \hfill
    \begin{subfigure}{0.48\linewidth}
        \includegraphics[width=\linewidth]{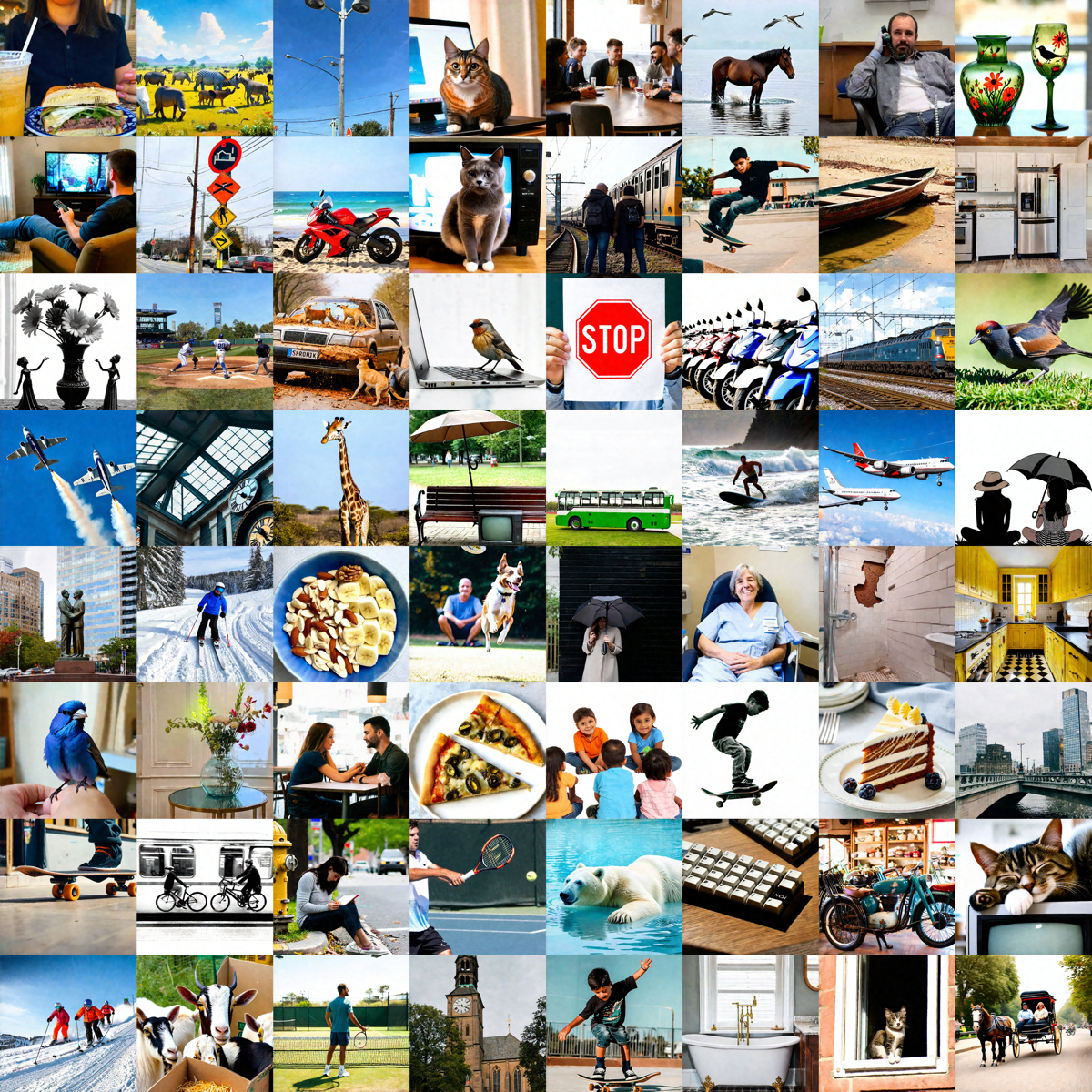}
        \caption{D-GRS}
        \label{fig:sub-b}
    \end{subfigure}
    \caption{First 64 images 512$\times$512 selected at random generated by SD3 conditionally on the prompts reported in Table~\ref{tab:prompts}, using (a) RMC and (b) D-GRS with $(K=1, L=56)$ and $(K=7, L=2)$, respectively, in the setting described in Section~\ref{sec:exp-sd3}, i.e., $T=50$, free-guidance of $7$, churn $\varepsilon=0.8$.}
    \label{fig:sd3-images}
\end{figure}
\newcolumntype{P}[1]{>{\raggedright\arraybackslash}p{#1}}

\begin{table}[p]
    \centering
    \scriptsize
    \setlength{\tabcolsep}{3pt}
    \renewcommand{\arraystretch}{0.9}
    \begin{tabular}{@{}r P{0.41\linewidth} r P{0.41\linewidth}@{}}
        \toprule
         & Prompts &  & \\
        \midrule
        1  & a person at a table with a sandwich and a drink & 33 & A Ma and Pa statue outside some high rise office buildings \\
        2  & A land with a lot of different animals. & 34 & A guy is skiing there is a lot of snow. \\
        3  & A street light at an intersection in a small town. & 35 & a large blue bowl of almonds, bananas and walnuts \\
        4  & A cat sitting on top of a laptop computer. & 36 & Dog jumping in air to catch flying disc with adult sitting in background. \\
        5  & A group of people sitting around a table together & 37 & A woman holding an umbrella standing next to a black wall. \\
        6  & A horse in the water next to the birds & 38 & A lady is in the hospital and sitting in a chair smiling. \\
        7  & a man sitting down with a telephone up to his ear & 39 & A bathroom that has a broken wall in the shower. \\
        8  & A vase with a bird on it and a matching wineglass with flowers on it. & 40 & A long, narrow yellow kitchen with black and white floor tiles. \\
        9  & A man sitting in a chair holding a remote and watching television in a living room. & 41 & a blue bird sitting on someone's arm in a room \\
        10 & a stack of traffic signs on a pole next to a street. & 42 & A table with a huge glass vase and fake flowers come out of it. \\
        11 & A red motorcycle parked next to a beach near the ocean. & 43 & A man sitting next to a woman at a table. \\
        12 & The grey cat with white feet sits on a televsion set. & 44 & A few sliced of olive pizza sitting on a white plate. \\
        13 & A couple of people that are standing near a train. & 45 & A group of children sitting around each other. \\
        14 & A young man doing a skateboard trick outside. & 46 & an image of boy on a skateboard doing tricks \\
        15 & A boat sitting on the ground, where their use to be water. & 47 & A slice of layer cake on a plate that is garnished \\
        16 & A kitchen area with two refrigerators and a microwave. & 48 & A bridge in a city on an overcast day. \\
        17 & Black and White picture of figurines around a vase with flowers & 49 & A skateboard that has its wheels on the floor. \\
        18 & some baseball players are playing baseball on a field & 50 & A group of bicycles on a subway train. \\
        19 & Animals surrounding a car and making a large mess. & 51 & The woman sits on the curb writing on a notepad near a fire hydrant. \\
        20 & a bird stands on top of a laptop keyboard & 52 & a man getting ready to hit a tennis ball with a racket \\
        21 & A person holding up a paper with a red stop sign on it. & 53 & A white polar bear laying on top of a pool of water. \\
        22 & This is an image of a row of scooters & 54 & There are keyboard keys on a wooden table. \\
        23 & a train on a train track with a sky background & 55 & A variety of old motorcycles on display in a shop \\
        24 & A bird aggressively protects his patch of turf. & 56 & A cat sleeping on the tv with his paw hanging down. \\
        25 & Two airplanes flying through a blue sky with smoke pouring out of their rear ends. & 57 & A group of people riding skis down a snow covered slope. \\
        26 & a building with a clock near the ceiling and skylights & 58 & A bunch of goats are eating out of a box \\
        27 & a very tall giraffe standing in the wilderness & 59 & A man standing on a grass tennis court and holding a tennis racket in his hand. \\
        28 & An umbrella lies behind a park bench that a small television is on. & 60 & A large clock tower over a church next to trees. \\
        29 & a lime and green bus parked on a grass field parking lot. & 61 & A young boy doing a trick on his skate board. \\
        30 & A man is surfboarding right up to the steep shoreline. & 62 & A white bath tub sitting under a window next to a sink. \\
        31 & one plane flies upside down over another plane & 63 & A cat is sitting outside on the ledge of a window. \\
        32 & Two girls sitting on the ground, one with a hat on, the other with an open umbrella. & 64 & Some people are riding along in a horse drawn Carriage \\
        \bottomrule
    \end{tabular}
    \caption{The first 64 prompts from COCO 2014~\cite{coco} used to conditionally generate the images shown in Figure~\ref{fig:sd3-images}, following the experimental setting of Section~\ref{sec:exp-sd3}.}    \label{tab:prompts}
\end{table}

\end{document}